\newcommand{\delete}[1] {}
\newcommand{\eps}{\varepsilon}
\newcommand{\done}{\textit{Synthetic1}\xspace}
\newcommand{\dtwo}{\textit{Synthetic2}\xspace}
\newcommand{\dthr}{\textit{Synthetic3}\xspace}
\newcommand{\dfour}{\textit{Synthetic4}\xspace}
\newcommand{\dfive}{\textit{Synthetic5}\xspace}
\newcommand{\dsix}{\textit{Synthetic6}\xspace}
\newcommand{\mush}{\textit{Mushroom}\xspace}
\newcommand{\cancer}{\textit{Cancer}\xspace}
\newcommand{\naiv}{\textsc{Naive}\xspace}
\newcommand{\vote}{\textsc{Voting}\xspace}
\newcommand{\randorg}{\textsc{Rand}\xspace}
\newcommand{\rand}{\textsc{RandEmp}\xspace}
\newcommand{\supo}{\textsc{MaxMarg}\xspace}
\newcommand{\med}{\textsc{Median}\xspace}
\newcommand{\itsupp}{\textsc{IterativeSupports}\xspace}
\newcommand{\ritsupp}{\textsc{WeightedSampling}\xspace}
\newcommand{\oalgo}{\textsc{Mwu}\xspace}
\newcommand{\oalgoes}{\textsc{MwuEmp}\xspace}
\newcommand{\denselist}{\itemsep -2pt\parsep=-1pt\partopsep -2pt \vspace{-.05in}}
\theoremstyle{definition}
\begin{document}

%\twocolumn[
%\icmltitle{Multiparty Protocols for Learning on Distributed Data}
%\icmltitle{Efficient Protocols for Distributed Classification and Optimization}
\title{Efficient Protocols for Distributed Classification and Optimization\thanks{This work was sponsored in part by the NSF grants CCF-0953066 and CCF-0841185 and in part by the DARPA CSSG grant N11AP20022. All the authors gratefully acknowledge the support of the grants. Any opinions, findings, and conclusion or recommendation expressed in this material are those of the author(s) and do not necessarily reflect the view of the funding agencies or the U.S. government.}}

% It is OKAY to include author information, even for blind
% submissions: the style file will automatically remove it for you
% unless you've provided the [accepted] option to the icml2012
% package.
\author{Hal Daum\'e III \\ University of Maryland, College Park \and Jeff M. Phillips \\ University of Utah \and Avishek Saha \\ University of Utah \and Suresh Venkatasubramanian \\ University of Utah }
%\affiliation{blah \and blah}

\keywords{distributed learning, communication complexity, multiplicative weight update}

%\vskip 0.3in
%]

\maketitle

\begin{abstract}
In distributed learning, the goal is to perform a learning task over data distributed across multiple nodes with minimal (expensive) communication. Prior work~\citep{daume12distributed} proposes a general model that bounds the communication required for learning classifiers while allowing for $\eps$ training error on linearly separable data adversarially distributed across nodes. 

In this work, we develop key improvements and extensions to this basic model. Our first result is a two-party \emph{multiplicative-weight-update} based protocol that uses $O(d^2 \log{1/\eps})$ words of communication to classify distributed data in arbitrary dimension $d$,  $\eps$-optimally. This readily extends to classification over $k$ nodes with $O(kd^2 \log{1/\eps})$ words of communication. 
Our proposed protocol is simple to implement and is considerably more efficient than baselines compared, as demonstrated by our empirical results. 

In addition, we illustrate general algorithm design paradigms for doing efficient learning over distributed data. We show how to solve fixed-dimensional and high dimensional linear programming efficiently in a distributed setting where constraints may be distributed across nodes. Since many learning problems can be viewed as convex optimization problems where constraints are generated by individual points, this models many typical distributed learning scenarios. Our techniques make use of a novel connection from multipass streaming, as well as adapting the multiplicative-weight-update framework more generally to a distributed setting. As a consequence, our methods extend to the wide range of problems solvable using these techniques. 
\end{abstract}
%First, we provide a better bound on communication for random sampling based multiparty protocol.

%\listoffixmes

%===========================================================================================================
\section{Introduction}
In recent years, distributed learning (learning from data spread across multiple locations) has witnessed a lot of research interest~\citep{langford11dist}. One of the major challenges in distributed learning is to minimize communication overhead between different parties, each possessing a disjoint subset of the data.  Recent work~\citep{daume12distributed} has proposed a distributed learning model that seeks to minimize communication by carefully choosing the most informative data points at each node. The authors present a number of general sampling based results as well as a specific two-way protocol that provides a logarithmic bound on communication for the family of linear classifiers in $\mathbb{R}^2$. Most of their results pertain to two players but they propose basic extensions for multi-player scenarios. A distinguishing feature of this model is that it is \emph{adversarial}. Except linear separability, no distributional or other assumptions are made on the data or how it is distributed across nodes. 

In this paper, we develop this model in two substantial ways. First, we extend the results on linear classification to arbitrary dimensions, in the process presenting a more general algorithm that does not rely on explicit geometric constructions. This approach exploits the multiplicative weight update (MWU) framework (specifically its use in boosting) and retains desirable theoretical guarantees -- \emph{data-size-independent} communication between nodes in order to classify data -- while being simple to implement. Moreover, it easily extends to $k$-players with an additional  $k$ communication over the two-player result, which improves the earlier results in two dimensions by a factor of $k$. A second contribution of this work is to demonstrate how general convex optimization problems (for example, linear programming, SDPs and the like) can be solved efficiently in this distributed framework using ideas from both multipass streaming, as well as the well-known multiplicative weight update method. Since many (batch) learning tasks can be reduced to convex optimization problems, this second contribution opens the door to deploying many other learning tasks in the distributed setting with minimal communication. 

\paragraph{Outline.} 
Our main two-party result is proved in Section~\ref{sec:r-2party}, based on background in Section~\ref{sec:background}. Using a new sampling protocol for $k$ players (Section~\ref{sec:improved-random}) we extend the two-party result to $k$ players in Section~\ref{sec:k-party-protocol} and present an empirical study in Section~\ref{sec:experiments}. In Section~\ref{sec:opt} we present our results for distributed optimization. 

\paragraph{Related Work.}
Existing work in distributed learning mainly focuses on either inferring an accurate global classifier from multiple distributed sub-classifiers learned individually (at respective nodes) or on improving the efficiency of the overall learning protocol. The first line of work consists of techniques like \emph{parameter mixing}~\citep{mcdonald10diststrucperc,mann09distmem} or \emph{averaging}~\citep{collins02discHMM} and classifier \emph{voting}~\citep{bauer99voting-emp}. These approaches do admit convergence results but lack any bounds on the communication. Voting, on the other hand, has been shown~\citep{daume12distributed} to yield suboptimal results on adversarially partitioned datasets. The goal of the second line of work is to make distributed algorithms scale to very large datasets; many of these works~\citep{NIPS2006_725,Teo:2010:BMR:1756006.1756016} depend on MapReduce to extract performance improvement. \cite{DBLP:journals/corr/abs-1012-1367} averaged over mini-batches of accumulated gradients to improve regret bounds for distributed online settings. \citep{NIPS2010_1162} proposed a MapReduce based improved parallel stochastic gradient descent and more recently~\citep{NIPS2011_0771} improved the time complexity of $\gamma$-margin parallel algorithms from $\Omega(1/\gamma^2)$ to $O(1/\gamma)$. Finally,~\citep{NIPS2010_0423} and~\citep{NIPS2011_0574} consider optimization in distributed settings but their convergence analysis applies to specific cases of subgradient and stochastic gradient descent algorithms.

Surprisingly, communication in learning has not been studied as a resource to be used sparingly. And as \citep{daume12distributed} and this work demonstrates, intelligent interaction between nodes, communicating relevant aspects of the data, not just its classification, can greatly reduce the necessary communication over existing approaches. On large distributed systems, communication has become a major bottleneck for many real-world problems; it accounts for a large percentage of total energy costs, and is the main reason that MapReduce algorithms are designed to minimize rounds (of communication).  This strongly motivates the need to incorporate the study of this aspect of an algorithm directly, as presented and modeled in this paper.  

Recently but independently, research by \citep{balcan12colt} considers very similar models to those of \citep{daume12distributed}. They also consider adversarially distributed data among $k$ parties and attempt to learn on the adversarially distributed data while minimizing the total communication between the parties. Like \citep{daume12distributed} the work of \citep{balcan12colt} presents both agnostic and non-agnostic results for generic settings, and shows improvements over sampling bounds in several specific settings including the $d$-dimensional linear classifier problem we consider here (also drawing inspiration from boosting). In addition, their work provides total communication bounds for decision lists and for proper and non-proper learning of parity functions. They also extend the model so as to preserve differential and distributional privacy while conserving total communication, as a resource, during the learning process.  

In contrast, this work identifies optimization as a key primitive underlying many learning tasks, and focuses on solving the underlying optimization problems as a way to provide general communication-friendly distributed learning methods. We introduce techniques that rely on multiplicative weight updates and multi-pass streaming algorithms.  Our main contributions are translating these techniques into this distributed setting and using them to solve LPs (and SDPs) in addition to solving for $d$-dimensional linear separators.  

\section{Background}
\label{sec:background}
In this section, we revisit the model proposed in \citep{daume12distributed} and mention related results.

\paragraph{Model.}
We assume that there are $k$ parties $P_1, P_2, \ldots P_k$. Each party $P_i$ possesses a dataset $D_i$ that no other party has access to, and each $D_i$ may have both positive and negative examples.  The goal is to classify the full dataset $D = \cup_i D_i$ correctly.  
We assume that there exists a perfect classifier $h^*$ from a family of classifiers $\EuScript{H}$ with associated range space $(D,\EuScript{H})$ and bounded VC-dimension $\nu$.  We are willing to allow $\eps$-classification error on $D$ so that up to $\eps |D|$ points in total are misclassified.  

Each \emph{word} of data (e.g., a single point or vector in $\mathbb{R}^d$ counts as $O(d)$ words) passed between any pair of parties is counted towards the total communication; this measure in words allows us to examine the cost of extending to $d$-dimensions, and allows us to consider communication in forms other than example points, but does not hinder us with precision issues required when counting bits.  
For instance, a protocol that broadcasts a message of $M$ words (say $M/d$ points in $\mathbb{R}^d$) from one node  to the other $k-1$ players costs $O(kM)$ communication. 
The goal is to design a protocol with as little communication as possible. We assume an \emph{adversarial} model of data distribution; in this setting we prepare for the worst, and allow some \emph{adversary} to determine which player gets which subset of $D$.  

\paragraph{Sampling bounds.}
Given any dataset $D$ and a family of classifiers with bounded VC-dimension $\nu$, then a random sample of size 
\begin{equation}
s_{\eps,\nu} = O(\min\{ (\nu/\eps) \log (\nu/\eps), \nu/\eps^2 \})
\label{eq:sampling}
\end{equation}
from $D$ has at most $\eps$-classification error on $D$ with constant probability~\citep{book09anthony}, as long as there exists a perfect classifier.  Throughout this paper we will assume that a perfect classifier exists.  
This constant probability of success can be amplified to any $1-\delta$ with an extra $O(\log (1/\delta))$ factor of samples.

\paragraph{Randomly partitioned distributions.}
Assume that for all $i \in [1,k]$, each party $P_i$ has a dataset $D_i$ drawn from the same distribution.  That is, all datasets $D_i$ are identically distributed.  This case is much simpler than what the remainder of this paper will consider.  
Using (\ref{eq:sampling}), each $D_i$ can be viewed as a sample from the full set $D = \cup_i D_i$, and with \emph{no} communication each party $P_i$ can faithfully estimate a classifier with error $O((\nu/|D_i|) \log (\nu |D_i|))$.  

Henceforth we will focus on \emph{adversarially} distributed data.  

\paragraph{One-way protocols.}
Consider a restricted setting where protocols are only able to send data from parties $P_i$ (for $i\geq 2$) to $P_1$; a restricted form of \emph{one-way communication}.  
We can again use (\ref{eq:sampling}) so that all parties $P_i$ send a sample $S_i$ of size $s_{\eps,\nu}$ to $P_1$, and then $P_1$ constructs a global classifier on $\cup_{i=2}^k S_i$ with $\eps$-classification error $\cup_{i=1}^k D_i$; this requires $O(d k s_{\eps,\nu})$ words of communication for points in $\mathbb{R}^d$.  

For specific classifiers~\cite{daume12distributed} do better.  
For thresholds and intervals one can learn a \emph{zero}-error distributed classifier using constant amount of one-way communication. The same can be achieved for axis-aligned rectangles with $O(kd^2)$ words of communication.  However, those authors show that hyperplanes in $\mathbb{R}^d$, for $d\geq 2$, require at least $\Omega(k/\eps)$ one-way bits of communication to learn an $\eps$-error distributed classifier.

\paragraph{Two-way protocols.}
Hereafter, we consider two-way protocols where any two players can communicate back and forth.  
It has been shown \citep{daume12distributed} that, in $\mathbb{R}^2$, a protocol can learn linear classifiers with at most $\eps$-classification error using at most $O(k^2 \log{1/\eps})$ communication.  This protocol is deterministic and relies on a complicated pruning argument, whereby in each round, either an acceptable classifier is found, or a constant fraction more of some party's data is ensured to be classified correctly.

%%% Local Variables: 
%%% mode: latex
%%% TeX-master: "main"
%%% End: 

%=========================================================================================================
\section{Improved Random Sampling for $k$-players}
\label{sec:improved-random}
Our first contribution is an improved two-way $k$-player sampling-based protocol using \emph{two-way} communication and the sampling result in (\ref{eq:sampling}).  
We designate party $P_1$ as a coordinator, and it gathers the size of each player's dataset $D_i$, simulates sampling from each player completely at random, and then reports back to each player the number of samples to be drawn by it, in $O(k)$ communication.  Then each other party $P_i$ selects $s_{\eps,\nu} |D_i|/|D|$ random points (in expectation), and sends them to the coordinator.  The union of this set satisfies the conditions of the result from (\ref{eq:sampling}) over $D = \cup_i D_i$ and yields the following result.  

\begin{theorem} 
\label{thm:new-generickway}
Consider any family of hypothesis that has VC-dimension $\nu$ for points in $\mathbb{R}^d$. Then there exists a two-way $k$-player protocol using $O(kd + d\min\{(\nu/\eps) \log (\nu/\eps), \nu/\eps^2\})$ total words of communication that achieves $\eps$-classification error, with constant probability.  
\end{theorem}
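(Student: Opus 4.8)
The plan is to realize the protocol sketched just above the statement and then check its correctness and its communication cost. Designate $P_1$ as the coordinator. In a first round each party $P_i$ sends the single number $|D_i|$ to $P_1$, costing $O(k)$ words in total; now $P_1$ knows $|D| = \sum_i |D_i|$ and fixes $s = s_{\eps,\nu}$ as in~(\ref{eq:sampling}). The coordinator then \emph{simulates} drawing a uniform random sample of size $s$ from $D$: all it needs is the number $s_i$ of draws that fall into each $D_i$, which it obtains by sampling the count vector $(s_1,\dots,s_k)$ from the appropriate law (multinomial with parameters $s$ and $(|D_1|/|D|,\dots,|D_k|/|D|)$ for a with-replacement sample, or multivariate hypergeometric for a without-replacement sample), so that $\sum_i s_i = s$ and $\mathbb{E}[s_i] = s\,|D_i|/|D|$. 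It reports $s_i$ back to each $P_i$, again $O(k)$ words total. Each $P_i$ draws a uniform random sample $S_i \subseteq D_i$ of size $s_i$ and sends it to $P_1$; since $\sum_i s_i = s$ and each point is $O(d)$ words, this costs $O(ds)$ words. Finally $P_1$ computes a classifier $h \in \EuScript{H}$ consistent with $S = \bigcup_i S_i$ and broadcasts it to all parties, costing $O(kd)$ words.

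For correctness, the key observation is that the two-stage procedure — first pick the counts $(s_1,\dots,s_k)$ from the law above, then independently pick a uniform random sample of size $s_i$ inside each $D_i$ — produces a multiset $S$ whose distribution is exactly that of a uniform random sample of size $s$ drawn from $D$. Because a perfect classifier $h^*$ exists for all of $D$, it is in particular consistent with $S$, so a consistent $h$ does exist and can be found by $P_1$; by~(\ref{eq:sampling}), any such $h$ misclassifies at most $\eps|D|$ points of $D$ with constant probability, and the broadcast in the last step hands this $h$ to every party. Adding the four contributions gives $O(k) + O(k) + O(ds) + O(kd) = O\!\left(kd + d\min\{(\nu/\eps)\log(\nu/\eps),\, \nu/\eps^2\}\right)$ total words, which is the claimed bound.

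The only step that genuinely needs care is the distributional claim: that composing the count-sampling step with independent within-party subsampling reproduces a bona fide uniform size-$s$ sample of $D$, so that the guarantee of~(\ref{eq:sampling}) — stated for such a sample — legitimately applies to $S$. This is the standard fact that a uniform sample, split according to a fixed partition of the ground set, induces (hyper)geometric counts with uniform conditional samples in each block; it is elementary but must be invoked explicitly, since the coordinator never sees $D$ and only simulates the counts. Everything else is bookkeeping of communication, where the single easily-missed point is the final $O(kd)$-word broadcast of the learned hypothesis, which is exactly what keeps the bound from collapsing to $O(k + ds)$.
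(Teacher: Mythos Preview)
Your proof is correct and follows essentially the same protocol the paper sketches in the paragraph immediately preceding the theorem: designate a coordinator, gather the sizes $|D_i|$, simulate the per-party sample counts, collect the samples, and learn a consistent hypothesis. You spell out two points the paper leaves implicit---the distributional equivalence between the two-stage sampling and a genuine uniform size-$s$ sample of $D$, and the final $O(kd)$ broadcast that accounts for the $kd$ term in the bound---but the underlying approach is identical.
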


Again using two-way communication, this type of result can be made even more general.  Consider the case where each $P_i$'s dataset arrives in a continuous stream; this is what is known as a \emph{distributed data stream}~\citep{DBLP:conf/soda/CormodeMY08}.  Then applying results of \citep{DBLP:conf/pods/CormodeMYZ10}, we can continually maintain a sufficient random sample at the coordinator of size $s_\eps$ communicating $O((k + s_{\eps,\nu}) d \log |D|)$ words.  

\begin{theorem} 
\label{thm:new-generickway-stream}
Consider any family of hypothesis that has VC-dimension $\nu$ for points in $\mathbb{R}^d$. 
Let each of $k$ parties have a stream of data points $D_i$ where $D = \cup_i D_i$.  
Then there exists a two-way $k$-player protocol using $O((k + \min\{(\nu/\eps) \log (\nu/\eps), \nu/\eps^2\})$ $d\log |D|)$ total words of communication that maintains $\eps$-classification error, with constant probability.  
\end{theorem}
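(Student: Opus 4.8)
The plan is to reduce the problem to a known continuous distributed-sampling primitive and then invoke the static sampling bound~(\ref{eq:sampling}). Concretely, designate $P_1$ as coordinator and run the distributed-streams sampling protocol of~\citep{DBLP:conf/pods/CormodeMYZ10} over the $k$ input streams with target sample size $s := s_{\eps,\nu} = O(\min\{(\nu/\eps)\log(\nu/\eps),\ \nu/\eps^2\})$. That protocol continually maintains, at the coordinator, a set $S$ that is a uniform random sample (without replacement) of the multiset of all points seen so far across all streams. After each arrival the coordinator locally (re)trains a classifier $h_S \in \EuScript{H}$ consistent with $S$; such a classifier exists because a perfect classifier on $D$ exists and its restriction to any prefix $D' = \cup_i D_i$ of the stream (with $D' \subseteq D$) is perfect on $D'$, hence on $S \subseteq D'$.

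Correctness then follows from~(\ref{eq:sampling}): since $S$ is a uniform random sample of size $s_{\eps,\nu}$ from the current union $D'$ and a perfect classifier on $D'$ exists, $h_S$ misclassifies at most $\eps |D'|$ points of $D'$ with constant probability, and this holds at every time step; one can amplify to probability $1-\delta$ at an extra $O(\log(1/\delta))$ factor in $s$. Because the retraining step is purely local, maintaining the classifier incurs no communication beyond that of maintaining $S$ itself.

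For the communication bound, recall that the protocol of~\citep{DBLP:conf/pods/CormodeMYZ10} maintains a size-$s$ sample from $k$ streams of total length $n$ using $O((k+s)\log n)$ messages in expectation, where each message carries $O(1)$ points/control tokens; the $k$ term absorbs the coordinator's set-up and the per-stream control traffic. Since a point in $\mathbb{R}^d$ is $O(d)$ words and $n = |D|$, the total communication is $O((k + s_{\eps,\nu})\, d \log |D|)$ words, matching the claimed bound.

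The main point to verify carefully is that the sampling primitive's guarantee is exactly the one we need: a uniform sample of the \emph{union} stream that is valid at \emph{every} time step (not merely at a prespecified query time), and that instantiating its communication bound with the VC-sample size $s_{\eps,\nu}$ is legitimate (it is, since the bound of~\citep{DBLP:conf/pods/CormodeMYZ10} holds for an arbitrary target size $s$). Everything else is a direct composition of three ingredients already available to us: static VC sampling for accuracy, distributed-stream sampling for online maintenance, and the word-counting convention to convert sampled points into words.
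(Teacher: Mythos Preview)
Your proposal is correct and follows essentially the same approach as the paper: the paper's own justification is just the sentence preceding the theorem, which invokes the distributed-stream sampling result of \citep{DBLP:conf/pods/CormodeMYZ10} to maintain a size-$s_{\eps,\nu}$ uniform sample at the coordinator with $O((k+s_{\eps,\nu})d\log|D|)$ words and then appeals to~(\ref{eq:sampling}). You have simply spelled out the details (coordinator designation, local retraining, existence of a perfect classifier on every prefix, and the word-counting conversion) that the paper leaves implicit.
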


%%% Local Variables: 
%%% mode: latex
%%% TeX-master: "main"
%%% End: 

\section{A Two-Party Protocol}
\label{sec:r-2party}

In this section, we consider only two parties, and for notational clarity, we refer to them as $A$ and $B$. $A's$ dataset is labeled $D_A$ and $B$'s dataset is labeled $D_B$.  Let $|D_B| = n$.  
Our protocol, summarized in Algorithm~\ref{alg:ouralgo}, is called \ritsupp. 
In each round, $A$ sends a classifier $h_A$ to $B$ and $B$ responds back with a set of points $R_B$, which it constructs by sampling from a weighting on its points. 
At the end of $T$ rounds (for $T = O(\log(1/\eps))$), we will show that by voting on the result from the set of $T$ classifiers $h_A$ will misclassify at most $\eps |D_B|$ points from $D_B$ while being perfect on $D_A$, and hence $\eps |D_B| < \eps |D_B \cup D_A| = \eps |D|$, yielding a $\eps$-optimal classifier as desired.  

There are two ways $R_B$ can construct its points: a random sample and a deterministic sample.  For simplicity, we will focus our presentation on the randomized version since it is more practical, although it has slightly worse bounds in the two-party case.  Then we will also mention and analyze the deterministic version.  

It remains to describe how $B$'s points are weighted and updated, which dictates how $B$ constructs the sample sent to $A$.
Initially, they are all given a weight $w_1 = 1$.  Then the re-weighting strategy (described in Algorithm \ref{alg:compsup}) is an instance of the multiplicative weight update framework; with each new proposed classifier $h_A$ from $A$, party $B$ increases all weights of misclassified points by a $(1+\rho)$ factor, and does not change the weight for correctly classified points.  We will show $\rho=0.75$ is sufficient.  
Intuitively, this ensures that consistently misclassified points eventually get weighted high enough that they are very likely to be chosen as examples to be communicated in future rounds.
The deterministic variant simply replaces Line 7 of Algorithm \ref{alg:compsup} with the weighted variant \citep{Mat91} of the deterministic construction of $R_B$ \citep{Cha01}; see details below.   

Note that this is roughly similar in spirit to the heuristic protocol  \citep{daume12distributed} that exchanged support points and was called \itsupp, which we will experimentally compare against.  But the protocol proposed here is less rigid, and as we will demonstrate next, this allows for a much less nuanced analysis.  

\begin{algorithm}[!htbp]
 %\caption{\textsc{Randomized IterativeSupports}}
 \caption{\ritsupp}
 \begin{algorithmic}
   \STATE \textbf{Input:} $D_A, D_B$, parameters: $0 < \eps < 1$
   \STATE \textbf{Output:} $h_{AB}$ (classifier with $\eps$-error on $D_A \cup D_B$)
	 \STATE \textbf{Init:} $R_B=\{\}$; $w_i^0 = 1\ \forall x_i \in D_B$;        
   \FOR{t = 1 $\ldots$ $T = 5 \log_2(1/\eps)$} 
     \STATE --------- \textbf{A's move} ---------
		 \STATE $D_A = D_A \cup R_B$; 
		 \STATE $h_A^t := Learn(D_A)$; 
		 \STATE send $h_A^t$ to $B$;
     \STATE --------- \textbf{B's move} ---------
%		 \STATE compute error of $h_A^t$ on $D_B$;
%		 \STATE if (\textsf{err} $\leq$ $\eps|D_B|$) then exit;
   	 \STATE $R_B$ := \oalgo($D_B$, $h_A^t$, $\rho = 0.75$, $c = 0.2$); send $R_B$ to $A$;
	 \ENDFOR
	 \STATE $h_{AB} = \textsf{Majority}(h_A^1,h_A^2,\ldots,h_A^T)$;
 \end{algorithmic}
 \label{alg:ouralgo}
\end{algorithm}

\begin{algorithm}[!htbp]
 \caption{\oalgo($D_B$, $h_A^t$, $\rho$, $c$)}
 \begin{algorithmic}[1]
   \STATE \textbf{Input:} $h_A^t, D_B$, parameters: $0 < \rho < 1$, $0 < c < 1$
   \STATE \textbf{Output:} $R_B$ (a set of $s_{c,d}$ points)
	 \FORALL{($x_i \in D_B$)}
	   \STATE if({$h_A^t(x_i) \neq y_i$}) then $w_i^{t+1} = w_i^t(1+\rho)$;
	   \STATE if({$h_A^t(x_i) == y_i$})   then $w_i^{t+1} = w_i^t$;
	 \ENDFOR
	 %\STATE normalize $w_i^+$s (distribution $\EuScript{D}^+$) and $w_i^-$s (distribution $\EuScript{D}^-$);
   \STATE randomly sample $R_B$ from $D_B$ (according to  $w^{t+1}$);
 \end{algorithmic}
 \label{alg:compsup}
\end{algorithm}

%-----------------------------------------------------------------------------------------------------------
\subsection{Analysis}
Our analysis is based on the multiplicative weight update framework (and closely resembles boosting). First, we state a key structural lemma. Thereafter, we use this lemma to prove our main result. 

As mentioned above (see (\ref{eq:sampling})), after collecting a random sample $S_\eps$ of size $s_{\eps,d} = O(\min\{ (d/\eps) \log(d/\eps), d/\eps^2\})$ drawn over the entire dataset $D \subset \mathbb{R}^d$, a linear classifier learned on $S_\eps$ is sufficient to provide $\eps$-classification error on all of $D$ with constant probability.  There exist deterministic constructions for these samples $S_\eps$ still of size $s_{\eps,\nu}$ \citep{Cha01}; although they provide at most $\eps$-classification error with probability $1$, they, in general, run in time exponential in $\nu$.  
Note that the VC-dimension of linear classifiers in $\mathbb{R}^d$ is $O(d)$, and these results still holds when the points are weighted and the sample is drawn (respectively constructed \citep{Mat91}) and error measured with respect to this weighting distribution.    
Thus $B$ could send $s_{\eps,d}$ points to $A$, and we would be done; but this is too expensive.  We restate this result with a constant $c$, so that at most a $c$ fraction of the weights of points are mis-classified (later we show that $c=0.2$ is sufficient with our framework).  
Specifically, setting $\eps=c$ and rephrasing the above results yields the following lemma.  

\begin{lemma}
\label{lem:cnet}
Let $B$ have a weighted set of points $D_B$ with weight function $w : D_B \to \mathbb{R}^+$.  
For any constant $c>0$, party $B$ can send a set $S_{c,d}$ of size $O(d)$ (where the constant depends on $c$) such that any linear classifier that correctly classifies all points in $S_{c,d}$ will misclassify points in $D_B$ with a total weight at most $c \sum_{x \in D_B} w(x)$.
The set $S_{c,d}$ can be constructed deterministically, or a weighted random sample from $(D_B,w)$ succeeds with constant probability.  
\end{lemma}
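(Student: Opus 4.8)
The plan is to recognize Lemma~\ref{lem:cnet} as the standard VC-dimension sampling bound~(\ref{eq:sampling}) applied with $\eps = c$ to a suitably chosen range space, and to note that, because $c$ is a constant, the $\nu/\eps^2$ branch of the $\min$ in~(\ref{eq:sampling}) already yields size $O(d)$.

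First I would fix the range space. Since a perfect classifier $h^\star \in \EuScript{H}$ exists on $D \supseteq D_B$, the set of points of $D_B$ misclassified by a linear classifier $h$ is exactly $E_h = \{x \in D_B : h(x) \ne h^\star(x)\}$; writing $A_h = \{x \in D_B : h(x) = +1\}$ and $C = \{x \in D_B : h^\star(x) = +1\}$ (a fixed set), this is $E_h = A_h \mathbin{\triangle} C$. Let $\EuScript{E} = \{E_h : h \text{ a linear classifier in } \mathbb{R}^d\}$. I would then check that $(D_B,\EuScript{E})$ has VC-dimension $O(d)$: taking the symmetric difference of every range of a set system with one fixed set $C$ is a bijection of the power set that preserves exactly which subsets are shattered, so $(D_B,\EuScript{E})$ has the same VC-dimension as the halfspace range space on $D_B$, namely $d+1$.

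Next I would invoke~(\ref{eq:sampling}). The desired conclusion — that any linear classifier correct on all of $S_{c,d}$ misclassifies total weight at most $c\sum_{x \in D_B} w(x)$ — is precisely the assertion that $S_{c,d}$ is a weighted $c$-net of $(D_B,\EuScript{E})$ under the distribution proportional to $w$: if $h$ is correct on $S_{c,d}$ then $E_h \cap S_{c,d} = \emptyset$, so the net property forces $w(E_h) \le c\sum_x w(x)$. Applying~(\ref{eq:sampling}) with $\eps = c$ and $\nu = O(d)$, a sample drawn according to $w$ of size $s_{c,d} = O(\min\{(\nu/c)\log(\nu/c),\,\nu/c^2\}) = O(d)$, with the hidden constant depending on $c$, is such a net (in fact a $c$-approximation) with constant probability; these bounds hold verbatim when the ground set is weighted and both the sampling and the error are measured under $w$ \citep{book09anthony}. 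For the deterministic version I would instead cite the deterministic $\eps$-net/$\eps$-approximation construction of \citep{Cha01} together with its weighted extension \citep{Mat91}, which output a set of the same size $O(d)$ with certainty, at the price of running time exponential in $d$.

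The only delicate point — and the one I would be most careful to get right — is the choice of range space: one must build nets for the \emph{error} regions $E_h$ (symmetric differences against the fixed true labeling), not merely for halfspaces, and then verify that this Boolean combination with a fixed set leaves the VC-dimension $O(d)$, so that~(\ref{eq:sampling}) applies with only a constant-factor loss. Everything else is the substitution $\eps \leftarrow c$ into known sampling theorems, with the $\min$ in~(\ref{eq:sampling}) collapsing to $O(d)$ precisely because $c$ is constant.
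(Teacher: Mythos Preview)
Your proposal is correct and follows essentially the same approach as the paper: the paper's justification (given in the paragraph immediately preceding the lemma) is simply to set $\eps = c$ in the sampling bound~(\ref{eq:sampling}), note that the VC-dimension of linear classifiers in $\mathbb{R}^d$ is $O(d)$, and cite \citep{Cha01} and \citep{Mat91} for the deterministic and weighted variants. Your write-up is more explicit about the underlying $\eps$-net machinery --- in particular, you spell out the error-region range space $\{E_h\}$ and verify that symmetric difference with a fixed set preserves VC-dimension --- but this is exactly the standard unpacking of the realizable-case bound the paper is invoking, not a different route.
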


We first state the bound using the deterministic construction of the set $S_{c,d}$, and then extend it to the more practical (from a runtime perspective) random sampling result, but with a slightly worse communication bound.  

\begin{theorem}
\label{thm:main-2party}
The deterministic version of two-party two-way protocol \ritsupp for linear separators in $\mathbb{R}^d$ misclassifies at most $\eps |D|$ points after $T = O(\log(1/\eps))$ rounds using $O(d^2\log(1/\eps))$ words of communication.
\end{theorem}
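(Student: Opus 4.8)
The plan is to read \ritsupp (Algorithm~\ref{alg:ouralgo}) as a boosting / multiplicative-weights procedure in which the ``weak learner'' is supplied by Lemma~\ref{lem:cnet}, and to run the standard potential argument on the weights $w^t$ that $B$ maintains on $D_B$. The communication bound is the easy half: in each of the $T = O(\log(1/\eps))$ rounds $A$ sends one linear classifier in $\mathbb{R}^d$, which is $O(d)$ words, and $B$ sends $R_B = S_{c,d}$, which by Lemma~\ref{lem:cnet} is a set of only $O(d)$ points (the hidden constant depending on the fixed value $c=0.2$), i.e.\ $O(d^2)$ words; summing over the $T$ rounds gives $O(d^2\log(1/\eps))$ words. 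The growth of $D_A$, the calls to \textsf{Learn}, and the (exponential-in-$d$) deterministic construction of $S_{c,d}$ are all local computation and cost no communication.

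For correctness I would first record two structural facts. (i) Since the restriction of the promised perfect classifier $h^*$ to $D_A \cup R_B$ is itself perfect, \textsf{Learn} returns an $h_A^t$ consistent on all of $D_A$ (which only grows over the run), so the final $h_{AB} = \textsf{Majority}(h_A^1,\ldots,h_A^T)$ is perfect on $D_A$; it therefore suffices to bound the number $N$ of points of $D_B$ that $h_{AB}$ misclassifies, since $N \le \eps|D_B| \le \eps|D|$ is exactly the claim. (ii) In round $t$, $B$ forms $R_B$ as the deterministic $c$-net $S_{c,d}$ for the \emph{current} weighting of $D_B$; in the following round $D_A$ contains this $R_B$, so $h_A^{t+1}$ classifies all of $S_{c,d}$ correctly, and Lemma~\ref{lem:cnet} then gives that the points of $D_B$ misclassified by $h_A^{t+1}$ carry at most a $c$ fraction of the total weight $W^t := \sum_{x \in D_B} w^t(x)$. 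Fact (ii) is precisely a weak learner of weighted edge $\tfrac{1}{2}-c > 0$; the conceptual point is that the weak-learning \emph{assumption} of ordinary boosting is here discharged by $\eps$-approximation theory for the range space of linear classifiers in $\mathbb{R}^d$ (VC-dimension $O(d)$), which is also why the \emph{weighted} construction of \citep{Mat91} is needed.

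With these in hand the rest is the familiar calculation. The update $w^{t+1}(x) = (1+\rho)w^t(x)$ on misclassified points adds $\rho\,w^t(M^t)$ to the total weight, where $M^t$ is the set of points of $D_B$ misclassified by $h_A^t$; so for $t \ge 2$, $W^{t+1} \le (1+\rho c)W^t$ by fact (ii); accounting separately for the first round, where there is no weak-learner guarantee and only $W^2 \le (1+\rho)W^1 = (1+\rho)|D_B|$ holds, gives $W^{T+1} \le (1+\rho)(1+\rho c)^{T-1}|D_B|$. Conversely, every $x$ that $h_{AB}$ misclassifies was misclassified by more than $T/2$ of the $h_A^t$, so its final weight is at least $(1+\rho)^{T/2}$, whence $W^{T+1} \ge N(1+\rho)^{T/2}$. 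Combining, $N \le \frac{1+\rho}{1+\rho c}\big(\frac{1+\rho c}{\sqrt{1+\rho}}\big)^{T}|D_B|$. The constants $\rho = 0.75$ and $c = 0.2$ are chosen so that the base $q = (1+\rho c)/\sqrt{1+\rho}$ is strictly below $1$ (equivalently $(1+\rho c)^2 < 1+\rho$, which these values satisfy comfortably), after which $T = O(\log(1/\eps))$ rounds --- the concrete $5\log_2(1/\eps)$ of the pseudocode, with the small additive slack absorbing the $\frac{1+\rho}{1+\rho c}$ prefactor and the first-round loss --- force $N < \eps|D_B|$, completing the proof. I expect the delicate points to be: checking that the chosen $(\rho,c)$ simultaneously keep $q<1$ and keep $|S_{c,d}| = O(d)$; keeping straight which weighting of $D_B$ the $c$-net guarantee of round $t+1$ refers to (the one from round $t$, before its update); and the first-round bookkeeping, where $M^1$ admits no bound beyond the trivial $w^1(M^1) \le W^1$.
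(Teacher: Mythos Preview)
Your proposal is correct and follows essentially the same route as the paper: a multiplicative-weight / boosting potential argument with Lemma~\ref{lem:cnet} playing the role of the weak learner, yielding $|S| \le |D_B|\bigl((1+\rho c)/\sqrt{1+\rho}\bigr)^T$ and hence $|S| < \eps|D_B|$ for $\rho=0.75$, $c=0.2$, $T = 5\log_2(1/\eps)$. You are in fact slightly more careful than the paper in separating out the first round (where $R_B = \emptyset$ and no $c$-fraction guarantee is available), which the paper glosses over by applying the $(1+\rho c)$ bound uniformly; your handling just introduces a harmless constant prefactor.
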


%\vspace{-.5in} 

\begin{proof}
At the start of each round $t$, let $\phi_t$ be the potential function given by the sum of weights of all points in that round.  Initially, $\phi_1 = \sum_{x_i \in D_B} w_i = n$ since by definition for each point $x_i \in D_B$ we have $w_i = 1$.  

Then in each round, $A$ constructs a classifier $h_A^t$ at $B$ to correctly classify the set of points that accounts for at least $1-c$ fraction of the total weight by Lemma \ref{lem:cnet}. All other misclassified points are upweighted by $(1+\rho)$. Hence, for round $(t+1)$ we have
$	\phi^{t+1}  \leq \phi^t \left((1-c) + c(1+\rho) \right)  =  \phi^t \left( 1 + c \rho \right) = n \left( 1 + c \rho \right)^t$.

Let us consider the weight of the points in the set $S \subset D_B$ that have been misclassified by a majority of the $T$ classifiers (after the protocol ends).  This implies every point in $S$ has been misclassified \emph{at least} $T/2$ number of times and \emph{at most} $T$ number of times. So the minimum weight of points in $S$ is $(1+\rho)^{T/2}$ and the maximum weight is $(1+\rho)^T$.

Let $n_i$ be the number of points in $S$ that has weight $(1+\rho)^i$ where $i \in [T/2,T]$. The potential function value of $S$ after $T$ rounds is $\phi^T_S = \sum_{i=T/2}^T n_i (1+\rho)^i$. 
Our claim is that $\sum_{i=1}^T n_i  = |S| \leq \eps n$.  
Each of these at most $|S|$ points have a weight of at least $(1+\rho)^{T/2}$. Hence we have that 
\[
	\phi^T_S  = \sum_{i=T/2}^T n_i (1+\rho)^i  
						\geq (1+\rho)^{T/2} \sum_{i=T/2}^T n_i 
						= (1+\rho)^{T/2} |S|.
\]

Relating these two inequalities we obtain the following,
\[
 |S| (1+\rho)^{T/2} \leq \phi^T_S  \leq  \phi^T = n \left( 1 + c \rho \right)^T.
\]
Hence (using $T = 5\log_2(1/\eps)$)
\[
|S|  \leq  n  \left( \frac{(1+c \rho)}{( 1 + \rho)^{1/2}} \right)^T 
  = 
  n \left( \frac{(1+c\rho)}{\left( 1 + \rho \right)^{1/2}} \right)^{5 \log_2(1/\eps)}
  =
  n (1/\eps)^{5 \log_2 \left( \frac{( 1 + c\rho)}{(1+\rho)^{1/2}} \right)} .
\]

Setting $c = 0.2$ and $\rho = 0.75$ we get $5 \log_2 \left((1+c \rho)/(1+\rho)^{1/2}\right)) < -1$ and thus $|S| < n (1/\eps)^{-1} < \eps n$, as desired since $\eps < 1$.   
Thus each round uses $O(d)$ points, each requiring $d$ words of communication, yielding a total communication of $O(d^2 \log(1/\eps))$.
\end{proof}

In order to use random sampling (as suggested in Algorithm \ref{alg:compsup}), we need to address the probability of failure of our protocol.  That is, more specifically the set $S_{c,d}$ in Lemma \ref{lem:cnet} is of size $O(d \log (1/\delta'))$ and a linear classifier that has no error on $S_{c,d}$ misclassifies points in $D_B$ with weight at most $c \sum_{x \in D_B} w(x)$, with probability at least $1- \delta'$.  

However, we would like this probability of failure to be a constant $\delta$ over the entire course of the protocol.  To guarantee this, we need the $c$-misclassification property to hold in each of $T$ rounds.  Setting $\delta' = \delta/T$, and applying the union bound implies that then the probability of failure at any point in the protocol is at most $\sum_{i=1}^T \delta' = \sum_{i=1}^T \delta/T = \delta$.  This increases the communication cost of each round to $O(d^2 \log (1/\delta')) = O(d^2 \log (\log (1/\eps)/\delta)) = O(d^2 \log \log (1/\eps))$ words, with a constant $\delta$ probability of failure.  Hence using random sampling as described in \ritsupp requires a total of $O(d^2 \log (1/\eps) \log \log (1/\eps))$ words of communication.  We formalize below.

\begin{theorem}
\label{thm:main-2party-rand}
The randomized two-party two-way protocol \ritsupp for linear separators in $\mathbb{R}^d$ misclassifies at most $\eps |D|$ points, with constant probability, after $T = O(\log(1/\eps))$ rounds using $O(d^2\log(1/\eps) \log \log (1/\eps))$ words of communication.
\end{theorem}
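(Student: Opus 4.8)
The plan is to reuse the potential-function argument from the proof of Theorem~\ref{thm:main-2party} essentially verbatim and to pay only for the extra cost of making the sampling step reliable. The sole place where that proof invokes a deterministic object is the set $S_{c,d}$ of Lemma~\ref{lem:cnet}: correctly classifying $S_{c,d}$ forces any linear classifier to misclassify at most a $c$ fraction of the current total weight on $D_B$, which is exactly what drives the recurrence $\phi^{t+1}\le\phi^{t}(1+c\rho)$. With the random-sampling version of Lemma~\ref{lem:cnet}, this $c$-misclassification property holds only with probability at least $1-\delta'$, and only if $B$ draws $O(d\log(1/\delta'))$ points (in \oalgo, Algorithm~\ref{alg:compsup}) rather than $O(d)$.

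First I would define, for each round $t$, the ``good event'' $G_t$ that the random sample $R_B$ produced in that round is a valid $c$-net for the weighting $w^{t}$ then in force; on $G_t$ the inequality $\phi^{t+1}\le\phi^{t}(1+c\rho)$ holds. Conditioning on the full history up to round $t$ (which fixes $w^{t}$ and all earlier classifiers), the fresh sample drawn in round $t$ satisfies $\Pr[\,G_t \mid \text{history}\,]\ge 1-\delta'$ by Lemma~\ref{lem:cnet}, \emph{regardless} of that history, since the guarantee of Lemma~\ref{lem:cnet} is worst-case over the weighting. Summing these conditional failure probabilities over the $T$ rounds gives $\Pr[\bigcap_{t=1}^{T} G_t]\ge 1-T\delta'$. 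Choosing $\delta'=\delta/T$ makes the entire protocol succeed with probability at least $1-\delta$, a constant.

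On the event $\bigcap_t G_t$, every step of the proof of Theorem~\ref{thm:main-2party} applies unchanged: after $T=5\log_2(1/\eps)$ rounds the set $S\subseteq D_B$ of points misclassified by a majority of $h_A^1,\dots,h_A^T$ has $|S|<\eps n=\eps|D_B|\le\eps|D|$, while each $h_A^t$ is perfect on $D_A$ and on all accumulated $R_B$'s because a perfect classifier on $D$ exists and the learner returns one. For communication, each round now costs $O(d\log(1/\delta'))=O(d\log(T/\delta))=O(d\log\log(1/\eps))$ points for constant $\delta$, i.e.\ $O(d^2\log\log(1/\eps))$ words, and there are $T=O(\log(1/\eps))$ rounds, for a total of $O(d^2\log(1/\eps)\log\log(1/\eps))$ words, as claimed.

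The one step I would be most careful about is that the events $G_t$ are neither independent nor a priori fixed: the weighting whose $c$-net we need is itself a random object determined by the earlier rounds, so the union bound is over a sequence of conditionally-defined bad events. This is benign precisely because the per-round guarantee of Lemma~\ref{lem:cnet} is uniform over the weighting, so the conditional failure probability in round $t$ is at most $\delta'$ no matter what happened before; summing these $T$ conditional probabilities bounds the overall failure probability by $T\delta'=\delta$. Everything else is a mechanical substitution of $O(d\log(1/\delta'))$ for $O(d)$ in the per-round cost of \ritsupp.
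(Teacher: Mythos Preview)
Your proposal is correct and matches the paper's approach essentially line for line: boost the per-round sample size to $O(d\log(1/\delta'))$, set $\delta'=\delta/T$, union-bound over the $T=O(\log(1/\eps))$ rounds, and then invoke the potential-function argument of Theorem~\ref{thm:main-2party} verbatim on the good event. Your explicit remark that the union bound is over conditionally-defined events (and is licensed by the worst-case-over-weightings guarantee of Lemma~\ref{lem:cnet}) is a nice point the paper leaves implicit.
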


%===========================================================================================================
\section{$k$-Party Protocol}
\label{sec:k-party-protocol}
In Section~\ref{sec:improved-random} we described a simple protocol (Theorem~\ref{thm:new-generickway}) to learn a classifier with $\eps$-error jointly among $k$ parties using $O(kd + d\min\{ \nu/\eps \log (\nu/\eps), \nu/\eps^2 \})$ words of total communication. We now combine this with the two-party protocol from Section~\ref{sec:r-2party} to obtain a $k$-player protocol for learning a joint classifier with error $\eps$. 

We fix an arbitrary node (say $P_1$) as the coordinator for the $k$-player protocol of Theorem~\ref{thm:new-generickway}. Then $P_1$ runs a version of the two-player protocol (from Section~\ref{sec:r-2party}) from $A$'s perspective and where players $P_2, \ldots, P_k$ serve jointly as the second player $B$.  To do so, we follow the distributed sampling approach outlined in Theorem \ref{thm:new-generickway}.  Specifically, we fix a parameter $c$ (set $c = 0.2$).  Each other node reports the total weight $w(D_i)$ of their data to $P_1$, who then reports back to each node what fraction of the total data $w(D_i)/w(D)$ they own.   Then each player sends the coordinator a random sample of size $s_{c,d} w(D_i)/w(D)$.  Recall that we require $s_{c,d} = O(d \log \log (1/\eps))$ in this case to account for probability of failure over all rounds.  
The union of these sets at $P_1$ satisfies the sampling condition in Lemma \ref{lem:cnet} for $\cup_{i=2}^k D_i$. 
$P_1$ computes a classifier on the union of its data and this joint sample and all previous joint samples, and sends the resulting classifier back to all the nodes.  Sending this classifier to each party requires $O(kd)$ words of communication.  
The process repeats for $T = \log_2(1/\eps)$ rounds.

\begin{theorem}
\label{thm:main-kparty}
The randomized $k$-party protocol for $\eps$-error linear separators in $\mathbb{R}^d$ terminates in $T = O(\log(1 / \eps))$ rounds using $O((kd + d^2 \log \log (1/\eps))\log(1/\eps))$ words of communication, and has a constant probability of failure.
\end{theorem}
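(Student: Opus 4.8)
The plan is to treat this as an instance of the two-party protocol of Section~\ref{sec:r-2party} in which the coordinator $P_1$ plays the role of party $A$ (holding $D_A := D_1$, which is always kept in the training set), while the remaining players $P_2,\ldots,P_k$ collectively play the role of party $B$ holding $D_B := \cup_{i=2}^k D_i$, with $n := |D_B|$. The only genuinely new ingredients relative to Theorem~\ref{thm:main-2party-rand} are (i) that $B$'s weighted sample must be assembled in a distributed fashion across the $k-1$ parties via the protocol of Theorem~\ref{thm:new-generickway}, and (ii) that the coordinator's classifier must be broadcast to all parties each round; the rest is a replay of the potential-function argument.

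I would maintain the weights distributedly: each $P_i$ (for $i\geq 2$) keeps a local weight $w_j=1$ for every $x_j\in D_i$ and, on receiving the round-$t$ classifier $h_A^t$ from $P_1$, sets $w_j \leftarrow (1+\rho)w_j$ for each locally misclassified point, incurring no communication. Define the global potential $\phi^t=\sum_{x_j\in D_B}w_j$, so $\phi^1=n$. The distributed sampling step of Theorem~\ref{thm:new-generickway} — each $P_i$ reports $w(D_i)$ to $P_1$, is told its fraction $w(D_i)/w(D_B)$, and returns a random sample of size $s_{c,d}\,w(D_i)/w(D_B)$ — yields at $P_1$ a set whose distribution is exactly a size-$s_{c,d}$ weighted random sample of $(D_B,w)$, so Lemma~\ref{lem:cnet} applies to the union: any linear classifier correct on this sample misclassifies points of $D_B$ of total weight at most $c\,\phi^t$, except with probability $\delta'$. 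Since $P_1$ trains on $D_1$ together with all joint samples gathered so far, its classifier $h_A^t$ is perfect on $D_1$ and correct on a $(1-c)$-weight fraction of $D_B$, whence $\phi^{t+1}\leq (1+c\rho)\phi^t$ and so after $T$ rounds the potential is at most $n(1+c\rho)^T$. Taking the majority vote of $h_A^1,\ldots,h_A^T$ and repeating the weight-bracketing argument from the proof of Theorem~\ref{thm:main-2party} — every point misclassified by the majority has been misclassified between $T/2$ and $T$ times, hence has weight in $[(1+\rho)^{T/2},(1+\rho)^T]$ — shows the set $S\subseteq D_B$ misclassified by the majority obeys $|S|(1+\rho)^{T/2}\leq n(1+c\rho)^T$; with $c=0.2$, $\rho=0.75$, and $T=5\log_2(1/\eps)$ this gives $|S|<\eps n\leq \eps|D|$, and since the majority is perfect on $D_1$ the total misclassification is at most $\eps|D|$.

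For the randomized guarantee I would union-bound over rounds exactly as in Theorem~\ref{thm:main-2party-rand}: requiring the $c$-misclassification property to hold in all $T$ rounds with constant probability $\delta$ forces $\delta'=\delta/T$ and hence $s_{c,d}=O(d\log(T/\delta))=O(d\log\log(1/\eps))$. The communication then splits, per round, into $O(k)$ words for the weight/fraction exchange, $O(kd)$ words for broadcasting the $O(d)$-word classifier to the other $k-1$ parties, and $d\cdot s_{c,d}=O(d^2\log\log(1/\eps))$ words for the returned joint sample; summing over the $T=O(\log(1/\eps))$ rounds gives $O((kd+d^2\log\log(1/\eps))\log(1/\eps))$ total words. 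The step needing the most care is justifying that per-party proportional sampling genuinely realizes one weighted random sample of $(D_B,w)$ meeting the hypothesis of Lemma~\ref{lem:cnet} — i.e.\ that the $c$-bound is against the combined weight function, not merely each $P_i$'s own — and carrying the failure probability uniformly across all $T$ rounds; the potential argument itself is essentially identical to the two-party case.
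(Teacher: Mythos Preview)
Your proposal is correct and follows essentially the same approach as the paper: reduce to the two-party protocol by letting $P_1$ play $A$ and the remaining parties jointly play $B$ via the distributed sampling of Theorem~\ref{thm:new-generickway}, then invoke the two-party correctness argument and union-bound the per-round failure probability as in Theorem~\ref{thm:main-2party-rand}. The paper's proof is simply more terse, citing Theorem~\ref{thm:main-2party} directly for correctness and the round bound rather than replaying the potential-function computation, but your communication accounting and the structure of the reduction match.
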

\begin{proof}
The correctness and bound of $T = O(\log(1/\eps))$ rounds follows from Theorem \ref{thm:main-2party}, since, aside from the total weight gathering step, from party $P_1$'s perspective it appears to run the protocol with some party $B$ where $B$ represents parties $P_2, P_3, \ldots, P_k$.  
The communication for $P_1$ to collect the samples from all parties is $O(kd + d s_{c,d}) = O(kd + d^2 \log \log (1/\eps))$.  And it takes $O(dk)$ communication to return $h_A$ to all $k-1$ other players.  
Hence the total communication over $T = O(\log(1/\eps))$ rounds is $O((kd + d^2 \log \log(1/\eps)) \log(1/\eps))$ as claimed.  
\end{proof}

However, this randomized sampling algorithm required a sample of size $s_{c,d} = O(d \log \log (1/\eps))$, we can achieve a different communication trade-off using the deterministic construction.  We can no longer use the result from Theorem \ref{thm:new-generickway} since that has a probability of failure.   In this case, in each round each party $P_i$ communicates a deterministically constructed set $S_{c,i}$ of size $s_{c,d} = O(d)$, then the coordinator $P_1$  computes a classifier that correctly classifies points from all of these sets, and hence has at most $c w(D_i)$ weight of points misclassified in each $D_i$.  The error is at most $c w(D_i)$ on each dataset $D_i$, so the error on all sets is at most $c \sum_{i=2}^k w(D_i) = c w(D)$.  Again using $T = O(\log (1/\eps))$ rounds we can achieve the following result.  

\begin{theorem}
\label{thm:main-kparty-det}
The deterministic $k$-party protocol for $\eps$-error linear separators in $\mathbb{R}^d$ terminates in $T = O(\log(1 / \eps))$ rounds using $O(kd^2 \log(1/\eps))$ words of communication.
\end{theorem}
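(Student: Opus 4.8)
The plan is to adapt the potential-function argument from the proof of Theorem~\ref{thm:main-2party} to the $k$-party setting, treating the coalition $P_2,\dots,P_k$ collectively as the single ``second player'' $B$. Write $n = |\cup_{i=2}^k D_i|$ and let $\phi^t = \sum_{i=2}^k \sum_{x\in D_i} w^t(x)$ be the total weight held by all non-coordinator parties at the start of round $t$, so that $\phi^1 = n$ since every point begins with weight $1$. The correctness claim and the $T = O(\log(1/\eps))$ bound on the number of rounds will then follow essentially verbatim, and the only genuinely new bookkeeping is the communication count.

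The key per-round step is that in round $t$ each party $P_i$ ($i\ge 2$) sends its deterministically constructed set $S_{c,i}$ of size $s_{c,d} = O(d)$, and $P_1$ learns a classifier $h_A^t$ that correctly classifies $\cup_{i=2}^k S_{c,i}$ together with $P_1$'s own data. Applying Lemma~\ref{lem:cnet} separately to each weighted set $(D_i, w^t)$ shows $h_A^t$ misclassifies at most a $c$-fraction of the weight of each $D_i$, and summing over $i$ it misclassifies at most $c\,\phi^t$ weight in total. Upweighting exactly those misclassified points by $(1+\rho)$ gives the same recurrence as in the two-party case, $\phi^{t+1} \le \phi^t\bigl((1-c) + c(1+\rho)\bigr) = \phi^t(1+c\rho)$, hence $\phi^{T+1} \le n(1+c\rho)^T$. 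The rest mirrors Theorem~\ref{thm:main-2party}: if $S \subseteq \cup_{i\ge 2} D_i$ is the set of points misclassified by a majority of $h_A^1,\dots,h_A^T$, then each such point was misclassified at least $T/2$ times and so has final weight at least $(1+\rho)^{T/2}$, giving $|S|(1+\rho)^{T/2} \le \phi^{T+1} \le n(1+c\rho)^T$. With $c = 0.2$, $\rho = 0.75$, and $T = 5\log_2(1/\eps)$, the exponent estimate already carried out in Theorem~\ref{thm:main-2party} yields $|S| < \eps n \le \eps|D|$, so the majority vote $h_{AB}$ is an $\eps$-error classifier on $D$.

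The reason the sample size stays $O(d)$ here (rather than the $O(d\log\log(1/\eps))$ of Theorem~\ref{thm:main-kparty}) is that the sets $S_{c,i}$ come from the \emph{deterministic} construction of Lemma~\ref{lem:cnet}, so the $c$-misclassification guarantee holds with certainty in each of the $T$ rounds; no union bound over rounds is required, and we correspondingly lose the ability to invoke the randomized sampling protocol of Theorem~\ref{thm:new-generickway}. For the communication bound, in each of the $T = O(\log(1/\eps))$ rounds every one of the $k-1$ non-coordinator parties sends $O(d)$ points of $d$ words each, for $O(kd^2)$ words, and $P_1$ broadcasts the new classifier back to all parties at cost $O(kd)$ words; the total is $O(kd^2\log(1/\eps))$.

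I do not expect a real obstacle here, since the argument is a direct transcription of the two-party potential analysis. The one point needing a line of care is that $P_1$ receives a \emph{union} of per-party sets $S_{c,i}$ rather than a single joint $c$-sample of $\cup_{i\ge2}D_i$; one must note that a classifier consistent with all of them nonetheless misclassifies weight at most $c\,w(D_i)$ in each $D_i$ (apply Lemma~\ref{lem:cnet} once per party) and therefore weight at most $c\sum_{i\ge 2} w(D_i) = c\,\phi^t$ in aggregate — which is exactly what the potential recurrence needs. Everything else follows the template of Theorem~\ref{thm:main-2party}.
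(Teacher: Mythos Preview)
Your proposal is correct and matches the paper's approach essentially line for line: each non-coordinator party sends a deterministic $O(d)$-size set, Lemma~\ref{lem:cnet} is applied per party to bound the misclassified weight by $c\,w(D_i)$ and these are summed to $c\,\phi^t$, and then the two-party potential argument of Theorem~\ref{thm:main-2party} is invoked verbatim. The paper actually presents this argument more tersely (in the paragraph preceding the theorem rather than in a formal proof), so your write-up is if anything more detailed than the original.
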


%%% Local Variables: 
%%% mode: latex

%%% TeX-master: "main"
%%% End: 

%===========================================================================================================
\section{Experiments}
\label{sec:experiments}
In this section, we present empirical results, using \ritsupp, for finding linear classifiers in $\mathbb{R}^d$ for two-party and $k$-party scenarios. We empirically compare amongst the following approaches.
\begin{itemize} \denselist
\item{\naiv:} a naive approach that sends all data from $(k-1)$ nodes to a coordinator node and then learns at the coordinator. For any dataset, this accuracy is the best possible.
\item{\vote:} a simple voting strategy that trains classifiers at each individual node and sends over the $(k-1)$ classifiers to a coordinator node. For any datapoint, the coordinator node predicts the label by taking a vote over all $k$ classifiers. 
\item{\randorg:} each of the $(k-1)$ nodes sends a random sample of size $s_{\eps,d}$ to a coordinator node and then a classifier is learned at the coordinator node using all of its own data and the samples received.
\item{\rand:} a cheaper version of \randorg that uses a random sample of size $9d$ from each party each round; this value was chosen to make this baseline technique as favorable as possible.  
\item{\supo:} \itsupp that selects informative points heuristically~\citep{daume12distributed}. A node is chosen as the coordinator and the coordinator exchanges maximum margin support points with each of the $(k-1)$ nodes. This continues until the training accuracy reaches within $(1-\eps)$ of the optimal (i.e., $(1-\eps)100\%$ in our case since we assume linearly separable data) or the communication cost equals the total size of the data at $(k-1)$ non-coordinator nodes (i.e., the cost for \naiv).
\item{\oalgo:} \ritsupp that randomly samples points based on the distribution of the weights and runs for $5 \log(1/\eps)$ number of rounds (ref. Section~\ref{sec:r-2party}).
\item{\oalgoes:} a cheaper version of \oalgo with an early stopping condition. The protocol is stopped early if the training accuracy has reached within $(1-\eps)$ of the optimal, i.e., $(1-\eps)100\%$.
\end{itemize}
We do not compare results with \med~\citep{daume12distributed} as it does not work on datasets beyond two dimensions. 
%Note that for both \supo and \oalgoes we supply the best accuracy (the accuracy of \naiv) so as to make early stopping possible. \oalgo is our proposed protocol and \oalgoes gives an estimate of how quickly our protocol reaches the desired target accuracy. 
For all these methods, SVM (from libSVM~\citep{libsvm} library), with a linear kernel, was used as the underlying classifier. We report training accuracy and communication cost. The training accuracy is computed over the combined dataset $D$ with an $\eps$ value of $0.05$ (where applicable). The communication cost (in words) of all methods are reported as ratios with reference to \oalgoes as the base method. All numbers reported are averaged over $10$ runs of the experiments; standard deviations are reported where appropriate. For \oalgo and \oalgoes, we use $\rho = 0.75$.

%In our multiparty protocol, for \naiv, \vote and \rand, we fix a node as the \emph{coordinator} and the remaining $(k-1)$ nodes send their information to the coordinator node which aggregates all the received information. For \supo, we fix one of the $k$-players as the \emph{coordinator} which updates its state by exchanging information from each of the remaining $(k-1)$ nodes. For \oalgo, any particular node is fixed as the \emph{coordinator} that communicates with all $(k-1)$ other nodes.

%-----------------------------------------------------------------------------------------------------------
\paragraph{Communication Cost Computation.}
In the following, we describe the communication cost computation for each method.  Each example point sent from one node to another incurs a communication cost of $d+1$, since it requires $d$ words to describe its position in $\mathbb{R}^d$ and $1$ word to describe its sign.  Similarly, each linear classifier requires $d+1$ words of communication to send; $d$ words to describe its direction, and $1$ word to describe its offset.  
\begin{itemize} \denselist
\item{\naiv:} assuming node $1$ to be coordinator, the total cost is the number of words sent over by each node to the coordinator and is equal to $\sum_{i=2}^k (d+1)|D_i|$. 
\item{\vote:} each node sends over its classifier to the coordinator node which incurs a total cost of $(d+1)(k-1)$. 
\item{\randorg:} the cost is equal to $(k-1)(d+1)s_{\eps,d} = (k-1)(d+1)(d/\eps) \log (d/\eps)$ times some constant where we set the constant to $1$. 
\item{\rand:} despite the theoretical cost of $(k-1)(d+1)s_{\eps,d} = (k-1)(d+1)(d/\eps) \log (d/\eps)$ (same as \randorg), in practice the random sampling based approach performs well with far fewer samples. Starting with a sample size of $5$, we first perform a doubling search to find the range within which \rand achieves $\eps$-optimal accuracy and then do binary search within this range to pick the smallest value for the sample size. Our goal is to pick one value that performs well across all of our datasets. In our case, $9d$ seems to work well for all the datasets we tested. Thus, in our case, \rand incurs a total cost of $9d(d+1)(k-1)$ words.
\item{\supo:} let $SP_i$ denote the support set of node $i$. Assuming node $1$ to be coordinator, the total cost in each round is equal to $(d+1)(k-1)|SP_1| + \sum_{i=2}^k (d+1)|SP_i|$ (the number words sent by the coordinator to all $(k-1)$ nodes plus the number of words sent back by the $(k-1)$ nodes to the coordinator). The cost accumulates over rounds until the target accuracy is reached or until the cost equals the total size of the data at $(k-1)$ non-coordinator nodes (i.e., the cost for \naiv). 
\item{\oalgo:} for our algorithm the cost incurred in each round is $(d+1) s_{c,d} (k-1) + (d+1) (k-1)$ words.  The first term comes from each player other than the coordinator sending $s_{c,d}$ points to the coordinator. The second term accounts for the coordinator replying with a classifier to each of those $(k-1)$ other players. However, we observe that exchanging a small constant number of samples, instead of $s_{c,d}$, each round works quite well in practice for all of our datasets.  For our analysis we had set $c = 1/5$ indicating that $s_{c,d}$ is some constant times $25 d$.  But in our experiments, we use a much smaller sample size of $100$ per round, with a word cost of $100(d+1)$ per round. The search process to find this smaller sample size is the same as described in \rand. The number of rounds for \oalgo is $5\log_2{(1/0.05)}\log_2{\log_2{(1/0.05)}} \simeq 5 \times 10 = 50$.% times some constant which we set to 1. %The constant of proportionality, for $c=0.25$ and $\rho=0.5$, comes to less than $1$ but we set this constant to $1$. 
\item{\oalgoes:} similar to \oalgo, the sample size chosen in $100$ and the cost is $100(d+1)(k-1) + (d+1)(k-1)$ words times the number of rounds until the early stopping criterion is met.
\end{itemize}

Note that given our cost computation, for some datasets the cost of \randorg, \rand and \oalgo can exceed the cost of \naiv (see, for example, \cancer). For those datasets, the size of the data is small compared to the dimensions. As a result, the communication costs (in number of points) for (a)  \randorg: $(k-1)s_{\eps,d} = (k-1)(d/\eps) \log (d/\eps)$, (b) \rand: $9d(k-1)$, and (c) \oalgo: $(100(k-1)+(k-1))T=101(k-1) \times 50$ are large compared to the total size of the data at the $(k-1)$ non-coordinator nodes (i.e., the cost of \naiv). %For example, in the two-party case, \iono has $176$ points per node and a dimension of $34$. Hence, the cost of \naiv is $176 \times (d+1) = 175 \times 35 = 6160$ words. But the cost of \randorg is $(k-1)(d+1)(d/\eps) \log(d/\eps) = 1 \times 35 \times (34/0.05) \times log_{10} (34/0.05) = 121160$, that of \rand is $9d(k-1) \times (d+1) = 9 \times 34 \times 1 \times 35 = 10710$ words and the cost of \oalgo is $101(k-1) \times T \times (d+1) = 101 \times 1 \times 60 \times 35 = 212100$ words.

%-----------------------------------------------------------------------------------------------------------
\paragraph{Datasets.}
We report results for two-party and four-party protocols on both synthetic and real-world datasets. 

Six datasets, three each for two-party and four-party case, have been generated synthetically from mixture of Gaussians. Each Gaussian has been carefully seeded to generate different data partitions. For \done, \dtwo, \dfour, \dfive, each node contains $5000$ data points ($2500$ positive and $2500$ negative) whereas for \dthr and \dsix, each node contains $8500$ data points ($4250$ positive and $4250$ negative) and all of these datapoints lie in $50$ dimensions. Additionally, we investigate the performance of our protocols on three real-world UCI datasets~\cite{UCIrepo}. Our goal is to select datasets that are linearly separable or almost linearly separable. We choose \cancer and \mush from the LibSVM data repository~\citep{libsvm}. 

The proposed protocol works for perfectly separable datasets. However, this assumption is too idealistic and in practice real-world datasets are seldom perfectly separable either because of presence of noise or due to limitations of linear classifiers (for example, what if the data has a non-linear decision boundary). So most of datasets have some amount of noise in them. This also shows that although our protocols were designed for noiseless data then work well on noisy datasets too. However, when applied on noisy data, we do not guarantee the communication bounds that were claimed for noiseless datasets. 

For the datasets that are not perfectly separable, the accuracy of \naiv (with some tolerance) that learns an SVM on the entire data can be considered to be the best accuracy that can be achieved for that particular dataset. Table~\ref{tab:datasets} presents a summary of the datasets, the best possible accuracy that can be achieved and also the accuracy required to yield an $\eps$-optimal classifier with $\eps = 0.05$.

Finally, in Tables~\ref{tab:syn-2party-results}-\ref{tab:real-results}, we highlight (in bold) the protocol that achieves the required accuracy and the lowest communication cost and thus is the best among the methods compared. By best we mean that the method has the cheapest communication cost as well an accuracy that is more that $(1-\eps)$ times the optimal, i.e., $95\%$ for our case for $\eps=0.05$. As will be frequently seen for \vote, the communication cost is the cheapest but the accuracy is far from the desired $\eps$-error specified, and in such circumstances we do not deem \vote as the best method.

\begin{table*}[!t]
	{\small
	\centering
	%\begin{tabular}{|p{1.5cm}||p{0.65cm}|p{0.5cm}|p{0.75cm}|p{0.5cm}|p{0.85cm}|p{0.5cm}|} 
	\begin{tabular}{|c|c|c|c|c|c|c|c|c|c|} 
		\hline
		{\bf Dataset} & {\bf total \#}  & \multicolumn{2}{|c|}{\bf \# of points per player} &{\bf dimensions}  &  {\bf type}  & {\bf perfectly} & {\bf best} & {\bf $\eps$-optimal}\\
		\cline{3-4}
		              &  {\bf of points} &  2-player  &  4-player  &         &            & {\bf separable?} & {\bf accuracy} & {\bf accuracy} \\
		\hline                                                                                           
		\hline
		\done         & 10000   &  5000      &   -        &  50     & synthetic  &   no  & 99.23 &   95.00    \\              %96.25     \\
		\dtwo         & 10000   &  5000      &   -        &  50     & synthetic  &   no  & 97.91 &   95.00    \\              %94.97     \\
		\dthr         & 17000   &  8500      &   -        &  50     & synthetic  &   no  & 97.39 &   95.00    \\              %94.47     \\
		\hline
		\dfour        & 20000   &    -       &  5000      &  50     & synthetic  &   no  & 99.26 &   95.00    \\              %96.28     \\
		\dfive        & 20000   &    -       &  5000      &  50     & synthetic  &   no  & 97.97 &   95.00    \\              %95.03     \\
		\dsix         & 34000   &    -       &  8500      &  50     & synthetic  &   no  & 97.47 &   95.00    \\              %94.54     \\
		\hline
    \cancer  	 	  &  683    &   342      &   171      &   10    & real		   &   no  & 97.07 &   95.00    \\              %94.16     \\
		\mush         &  8124   &  4062      &  2031      &  112    & real       &   yes &  100  &   95.00    \\              %97        \\
		%
    %\pima  	  	&  768    &   384      &   192      &    8    & real		   &   no  & 77.6  &   95.00    \\              %75.27     \\
		%\iono    	  	&  351    &   176      &    88      &   34    & real		   &   no  & 93.73 &   95.00    \\              %90.92     \\
		%\adult 	  		&  1605   &   803      &   402      &  123    & real		   &   no  & 86.23 &   95.00    \\              %83.64     \\
		\hline
	\end{tabular}
	\caption{Summary of datasets used ($\eps = 0.05$).}
	\label{tab:datasets}
	}
\end{table*}

%-----------------------------------------------------------------------------------------------------------
\subsection{Synthetic Results}
\label{ssec:syn-res}

Table~\ref{tab:syn-2party-results} compares the performance metrics of the aforementioned protocols for \emph{two}-parties. As can be seen, \vote performs the best for \done and \rand performs the best for \dtwo. For \dthr, \oalgoes requires the least amount of communication to learn an $\eps$-optimal distributed classifier. %In addition, in all cases, \oalgo's efficiency is substantially better than \supo and comparable to other baseline protocols. 
Note that, for \dtwo and \dthr, both \vote and \supo fail to produce a $\eps$-optimal ($\eps=0.05$) classifier. \supo exhibits this behavior despite incurring a communication cost that is as high as \naiv. Note that the cost of \supo being the same as \naiv does not imply that \supo send overs all points. Rather the accumulated cost of the support points become the same as the cost of \naiv at which point we stop the algorithm. Usually, by this point, the accuracy of \supo saturates and does not improve with exchange of more support points.

\begin{table*}[!t]
	%{\small
	\centering
	%\begin{tabular}{|p{1.5cm}||p{0.65cm}|p{0.5cm}|p{0.75cm}|p{0.5cm}|p{0.85cm}|p{0.5cm}|} 
	\begin{tabular}{|c||c|c||c|c||c|c||} 
		\hline
                 & \multicolumn{2}{c||}{\bf\done}      & \multicolumn{2}{|c||}{\bf\dtwo}     & \multicolumn{2}{|c||}{\bf\dthr}   \\
		\cline{2-7}                                                             
       			     & {\bf Acc}          & {\bf Cost}     & {\bf Acc}         & {\bf Cost}      & {\bf Acc} 	         & {\bf Cost}  \\
		\hline        			                                                                                                                                 
		\naiv        & 99.23 (0.0)        & 49.02          & 97.91 (0.0)       & 6.18           &  97.39 (0.0)        & 19.08       \\
		\vote        & {\bf95.00 (0.0)}   &  {\bf0.01}     & 60.64 (0.0)       & 0.01           &  74.55 (0.0)        &  0.01       \\
		\randorg     & 99.02 (0.0)        & 29.41          & 97.72 (0.0)       & 3.71           &  97.16 (0.0)        &  6.74       \\
		\rand        & 96.64 (0.1)        &  4.41          & {\bf95.13 (0.1)}  & {\bf0.56}      &  96.03 (0.1)        &  1.01       \\
		\supo        & 96.39 (0.0)        &  4.26          & 93.76 (0.0)       & 6.18           &  73.62 (0.0)        & 19.08       \\
    \oalgo       & 98.66 (0.1)        & 49.51          & 97.59 (0.1)       & 6.24           &  97.11 (0.1)        & 11.34       \\
    \oalgoes     & 95.00 (0.0)        &  1.00          & 95.17 (0.1)       & 1.00           &  {\bf95.25 (0.2)}   &  {\bf1.00}  \\
		\hline
	\end{tabular}
	\caption{Mean accuracy (Acc) and communication cost (Cost) required by \emph{two-party} protocols for synthetic datasets.}
	\label{tab:syn-2party-results}
	%}
\end{table*}

\begin{table*}[!t]
	%{\small
	\centering
	%\begin{tabular}{|p{1.5cm}||p{0.65cm}|p{0.5cm}|p{0.75cm}|p{0.5cm}|p{0.85cm}|p{0.5cm}|} 
	\begin{tabular}{|c||c|c||c|c||c|c||} 
		\hline
                 & \multicolumn{2}{c||}{\bf\dfour}     & \multicolumn{2}{|c||}{\bf\dfive}    & \multicolumn{2}{|c||}{\bf\dsix} \\
		\cline{2-7}                                                  
                 & {\bf Acc}         & {\bf Cost}      & {\bf Acc}        & {\bf Cost}       & {\bf Acc} 	         & {\bf Cost} \\
		\hline 
		\naiv        & 99.26 (0.0)       &100.00           & 97.97 (0.0)      &  12.72           & 97.47 (0.0)         &  54.84    \\
		\vote        & {\bf95.00 (0.0)}  &{\bf0.01}        & 65.83 (0.0)      &   0.01           & 75.52 (0.0)         &   0.01    \\
		\randorg     & 99.18 (0.0)       & 60.00           & 97.83 (0.0)      &   7.63           & 97.39 (0.0)         &  19.35    \\
		\rand        & 97.33 (0.1)       &  9.00           & 96.61 (0.1)      &   1.15           & 96.67 (0.1)         &   2.90    \\
		\supo        & 95.95 (0.0)       &  0.82           & 93.94 (0.0)      &  15.15           & 75.05 (0.0)         &  80.19    \\
    \oalgo       & 98.03 (0.2)       &  34.78  	       & 97.30 (0.1)      &   4.45           & 96.87 (0.1)         &  11.24    \\
    \oalgoes     & 95.11 (0.3)       &  1.00  	       & {\bf95.11 (0.2)} &   {\bf1.00}      & {\bf95.45 (0.2)}    &   {\bf1.00}    \\
		\hline
	\end{tabular}
	\caption{Mean accuracy (Acc) and communication cost (Cost) required by \emph{four-party} protocols for synthetic datasets.}
	\label{tab:syn-kparty-results}
	%}
\end{table*}

As shown in Table~\ref{tab:syn-kparty-results}, most of the two-party results carry over to the multiparty case. \vote is the best for \dfour whereas \oalgoes is the best for \dfive and \dsix. As earlier, both \vote and \supo do not yield an $0.05$-optimal distributed classifiers for \dfive and \dsix.

Figure~\ref{fig:syn-2party} (for two-party using \done) shows the communication costs (in \emph{log-scale}) with variations in the number of data points per node and the dimension of the data. Note that we do not report the numbers for \supo since \supo takes a long time to finish. However, for \done the numbers for \supo are similar to those of \rand and so their curves in the figure are also the same. Note that in Figure~\ref{fig:syn3-2party-dim}, the cost of \naiv increases as the number of dimensions increase. This is because the cost is multiplied by a factor of $(d+1)$, when expressed in words.

\begin{figure*}[!htbp]
\vspace{-0.45cm}
\centering
\subfigure[Communication cost vs Size]{
\includegraphics[width=9.00cm]{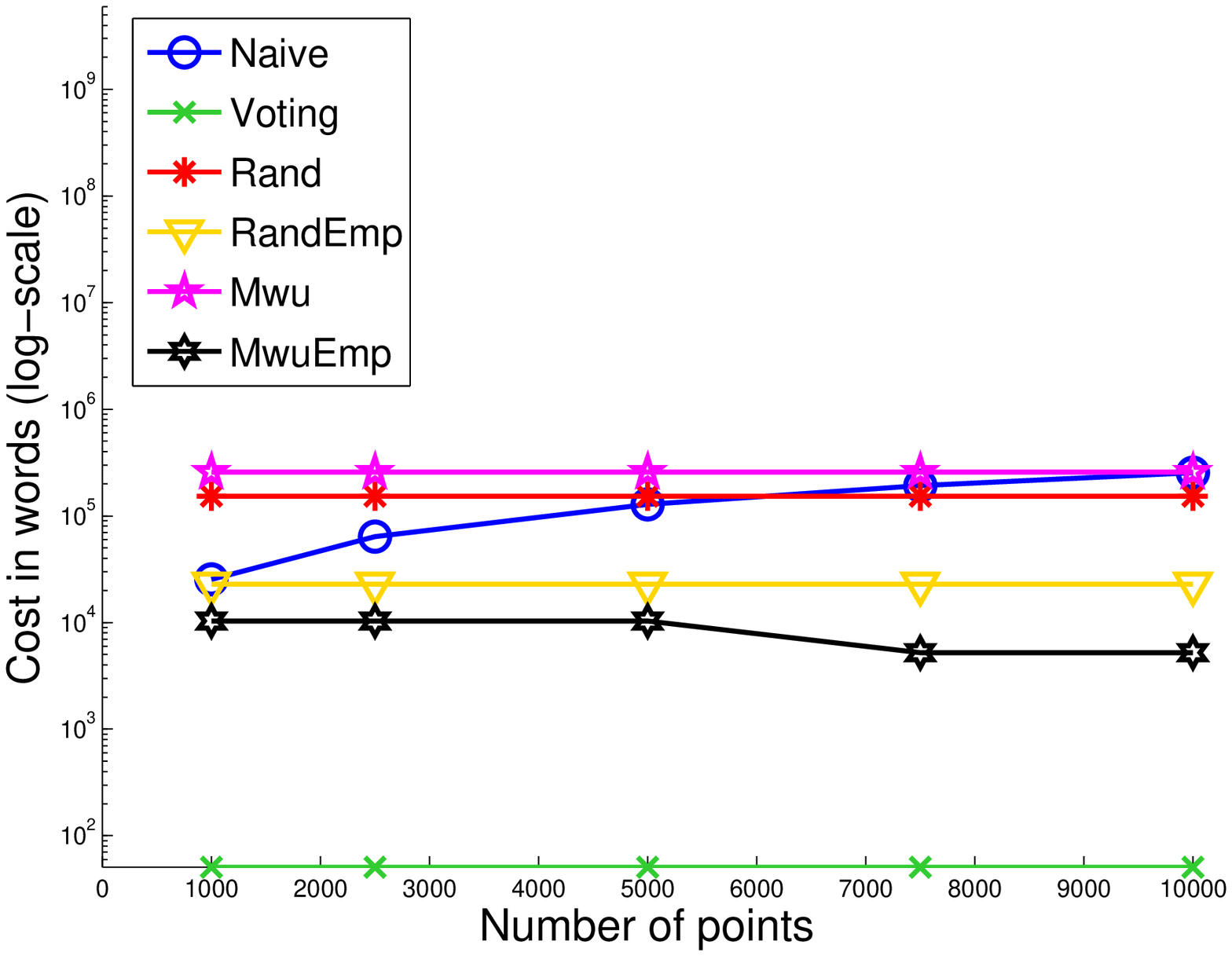}
\label{fig:syn3-2party-size}
\vspace{-.2in}
}
\hspace{-1.25cm}
\subfigure[Communication cost vs Dimension]{
\includegraphics[width=9.00cm]{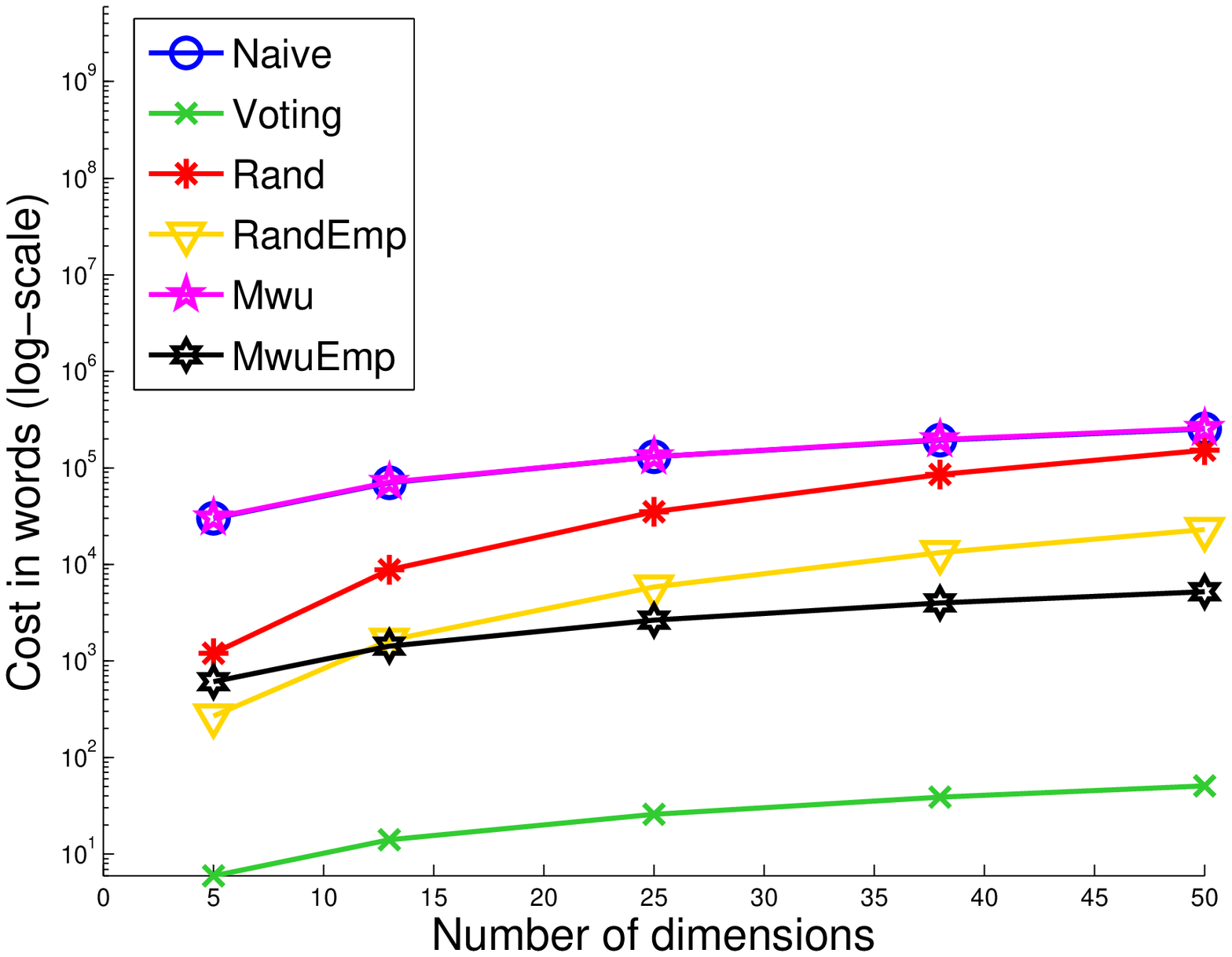}
\label{fig:syn3-2party-dim}
\vspace{-.2in}
}
\vspace{-.1in}
\caption{Communication cost vs Size and Dimensionality for \done with $2$-party protocol.}
\label{fig:syn-2party}
\vspace{-0.25cm}
\end{figure*}

%\begin{figure*}[!htbp]
%\vspace{-0.45cm}
%\centering
%\subfigure[Communication cost vs Size]{
%\includegraphics[width=9.10cm]{plots/syn3_K02_N01000_D00050_2party_size_log.pdf}
%\label{fig:syn3-2party-size}
%\vspace{-.2in}
%}
%\hspace{-1.25cm}
%\subfigure[Communication cost vs Dimension]{
%\includegraphics[width=9.10cm]{plots/syn3_K02_N01000_D00050_2party_dim_log.pdf}
%\label{fig:syn3-2party-dim}
%\vspace{-.2in}
%}
%\vspace{-.1in}
%\caption{Communication cost vs Size and Dimensionality for \dthr with $2$-party protocol.}
%\label{fig:syn-2party}
%\vspace{-0.25cm}
%\end{figure*}

%-----------------------------------------------------------------------------------------------------------
\subsection{Real-World Data}
\label{ssec:real-res}
Table~\ref{tab:real-results} presents results for two-party protocols and four-party protocols using real-world datasets. Other that the two-party case for \mush, \vote performs the best in all other case. However, note that for \mush using two-party protocol, \vote does not yield a $0.05$-optimal distributed classifier.

\begin{table*}[!htbp]
	%{\small
	\centering
	\begin{tabular}{|c||c|c||c|c||} 
		\hline
		             & \multicolumn{2}{c||}{\bf\cancer}      & \multicolumn{2}{|c||}{\bf\mush}         \\
		\cline{2-5}                                        
                 &     \bf Acc     &     \bf Cost      &     \bf Acc 	   &     \bf Cost 	       \\
		\hline                                             
		\multicolumn{5}{|c||}{2-party} \\                  
		\hline                                             
		\naiv        & 97.07 (0.0)     &  3.34             & 100.00 (0.0)    &  20.01                \\
		\vote        & {\bf97.36 (0.0)}&{\bf0.01}          & 88.38 (0.0)     &   0.00                \\
		\randorg     & 97.16 (0.1)     &  4.52             & 100.00 (1.1)    &  36.97                \\
		\rand        & 96.90 (0.2)     &  0.88             & 100.00 (0.0)    &   4.97                \\
		\supo   	   & 96.78 (0.0)     &  0.22             & 100.00 (0.0)    &   1.11                \\
    \oalgo       & 97.36 (0.2)     & 49.51             & 100.00 (0.0)    &  24.88                \\
    \oalgoes     & 96.87 (0.4)     &  1.00             & {\bf99.73 (0.5)}&  {\bf1.00}            \\
		\hline                                             
		\multicolumn{5}{|c||}{4-party} \\                  
		\hline                                             
	  \naiv        & 97.07 (0.0)     &  1.00             & 100.00 (0.0)    &  28.61                 \\
	  \vote        & {\bf97.36 (0.0)}&  {\bf0.03}        & {\bf95.67 (0.0)}&  {\bf0.01}            \\
		\randorg     & 97.19 (0.1)     &  12.81            & 100.00 (0.6)    & 105.70                \\
	  \rand        & 96.99 (0.1)     &  2.50             & 99.99 (0.0)     &  14.20                 \\
	  \supo        & 96.78 (0.0)     &  0.56             & 100.00 (0.0)    &   2.34                 \\
	  \oalgo       & 97.00 (0.2)     &  48.46            & 100.00 (0.1)    &  24.65                \\
	  \oalgoes     & 96.97 (0.3)     &  1.00             & 98.86 (0.4)     &   1.00                \\
		\hline
	\end{tabular}
	\caption{Mean accuracy (Acc) and communication cost (Cost) required by all protocols for real-world datasets.}
	\label{tab:real-results}
	%}
\end{table*}

The results for communication cost (in \emph{log-scale}) versus data size and communication cost (in \emph{log-scale}) versus dimensionality are provided in Figure~\ref{fig:real-2party} for two-party protocol using the \mush dataset. \oalgoes (denoted by the black line) is comparable to \supo and cheaper than all other baselines (except \vote).

%\begin{figure*}[!htbp]
%\vspace{-0.45cm}
%\centering
%\subfigure[Communication cost vs Size]{
%%\includegraphics[width=8.5cm]{plots/ionosphere_scale_2party_size.pdf}
%\includegraphics[width=9.25cm]{plots/a1a_2party_size.pdf}
%\label{fig:adult-2party-size}
%\vspace{-.2in}
%}
%\hspace{-1.5cm}
%\subfigure[Communication cost vs Dimension]{
%%\includegraphics[width=8.5cm]{plots/ionosphere_scale_2party_dim.pdf}
%\includegraphics[width=9.25cm]{plots/a1a_2party_dim.pdf}
%\label{fig:adult-2party-dim}
%\vspace{-.2in}
%}
%\vspace{-.1in}
%\caption{Communication cost vs Size and Dimensionality for \adult with $2$-party protocol.}
%\label{fig:real-2party}
%\vspace{-0.25cm}
%\end{figure*}

\begin{figure*}[!htbp]
\vspace{-0.45cm}
\centering
\subfigure[Communication cost vs Size]{
\includegraphics[width=9.00cm]{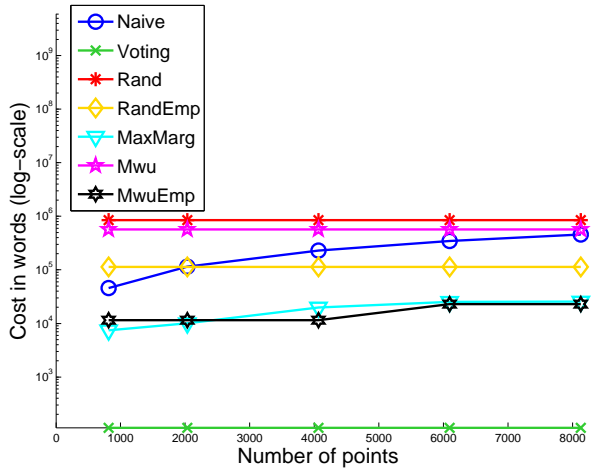}
\label{fig:adult-2party-size}
\vspace{-.2in}
}
\hspace{-1.25cm}
\subfigure[Communication cost vs Dimension]{
\includegraphics[width=9.00cm]{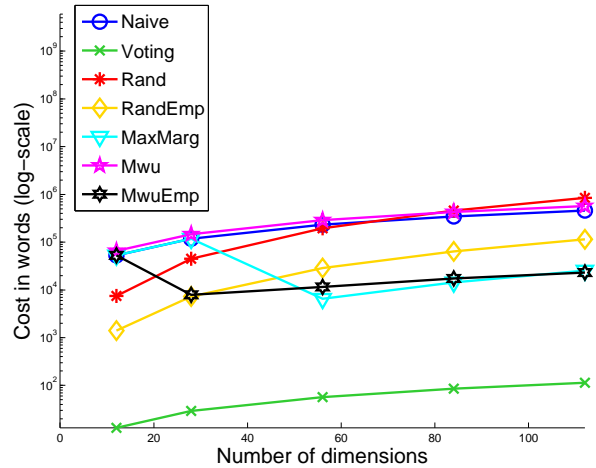}
\label{fig:adult-2party-dim}
\vspace{-.2in}
}
\vspace{-.1in}
\caption{Communication cost vs Size and Dimensionality for \mush with $2$-party protocol.}
\label{fig:real-2party}
\vspace{-0.25cm}
\end{figure*}

%\begin{figure*}[!htbp]
%\vspace{-0.45cm}
%\centering
%\subfigure[Cost vs Size]{
%\includegraphics[width=8.5cm]{plots/ionosphere_scale_2party_size.pdf}
%\label{fig:adult-4party-size}
%\vspace{-.2in}
%}
%\subfigure[Cost vs Dimension]{
%\includegraphics[width=8.5cm]{plots/ionosphere_scale_2party_dim.pdf}
%\label{fig:adult-4party-dim}
%\vspace{-.2in}
%}
%\vspace{-.1in}
%\caption{Cost vs Size and Dimensionality for \adult with $4$-party protocol.}
%\label{fig:real-4party}
%\vspace{-0.25cm}
%\end{figure*}

%-----------------------------------------------------------------------------------------------------------
\paragraph{Remarks.}
The goal of our experiments is to show that our protocols perform well, particularly for difficult or adversarially partitioned datasets. For easy datasets, any baseline technique can perform well. Indeed, \vote performs the best on \done and \dfour and \rand performs better than others on \dtwo. For the remaining three cases on synthetic datasets, \oalgoes outperforms the other baselines. On real world data, \vote usually performs well. However, as we have shown earlier, for some datasets \vote and \supo fail to yield an $\eps$-optimal classifier. In particular for \mush, using the two-party protocol, the accuracy achieved by \vote is far from $\eps$-optimal. This and earlier results show that there exists scenarios where \vote and \supo perform particularly worse and so learning by majority voting or by exchanging support points in between nodes is not a good strategy in distributed settings, even more so when the data is partitioned adversarially.

\section{Distributed Optimization}
\label{sec:opt}

Many learning problems can be formulated as convex (or even linear or semidefinite) optimizations~\citep{Bennett:2006:IOM:1248547.1248593}. In these problems, the data (points) act as constraints to the resulting optimization; for example, in a standard SVM formulation, there is one constraint for each point in the training set. 

Since in our distributed setting, points are divided among the different players, a natural \emph{distributed optimization} problem can be stated as follows. Each player $i$ has a set of constraints $C_i = \{ f_{ij}(x) \ge 0 \}$,  and the goal is to solve the optimization $\min g(x)$ subject to the union of constraints $\cup_i C_i$. As earlier, our goal is to solve the above with minimum communication.

A general solution for communication-efficient distributed convex optimization will allow us to reduce communication overhead for a number of distributed learning problems. In this section, we illustrate two algorithm design paradigms that achieves this for distributed convex optimization.

\subsection{Optimization via Multi-Pass Streaming}
\label{ssec:streaming}

A \emph{streaming algorithm}~\citep{muthubook2005data} takes as input a sequence of items $x_1, \ldots x_n$. The algorithm is allowed working space that is \emph{sublinear in $n$}, and is only allowed to look at each item once as it \emph{streams} past. A \emph{multipass} streaming algorithm is one in which the algorithm may make more than one pass over the data, but is still limited to sublinear working space and a single look at each item in each pass. 

The following lemma shows how any (multipass) streaming algorithm can be used to build a multiparty distributed protocol. 

\begin{lemma}
\label{stream-dist}
  Suppose that we can solve a given problem $P$ using a streaming algorithm that has $s$ words of working storage and makes $r$ passes over the data. Then there is a $k$-player distributed algorithm for $P$ that uses $krs$ words of communication. 
\end{lemma}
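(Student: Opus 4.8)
The plan is to simulate the streaming algorithm by passing its working memory around the players. Fix an arbitrary cyclic order $P_1, P_2, \ldots, P_k$, and within each player $P_i$ fix an arbitrary order on its local data $D_i$; concatenating these orders gives a single ordering of $D = \cup_i D_i$, which we treat as the stream fed to the streaming algorithm. The point to exploit is that a streaming algorithm is nothing more than a state of at most $s$ words together with a transition rule that updates the state given the next stream item, and this rule is \emph{local}: to process the items of $D_i$ one only needs the current state and $D_i$ itself.

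So the protocol runs in $r$ phases, one per pass of the streaming algorithm. In phase $j$, player $P_1$ takes the state left at the end of phase $j-1$ (the initial state when $j=1$), feeds in the items of $D_1$ one at a time while updating the state, and then sends the resulting $s$-word state to $P_2$; $P_2$ continues with $D_2$, and so on down to $P_k$, who at the end of phase $j$ sends the state back to $P_1$ to begin phase $j+1$. By construction this reproduces exactly the computation of the streaming algorithm on the concatenated stream, run for $r$ passes, so correctness of the distributed protocol is inherited from correctness of the streaming algorithm; at termination $P_k$ holds the output, and broadcasting it to all players (if desired) costs an additional $O(ks)$ words. For the count: each phase uses $k$ messages (the $k-1$ handoffs $P_i \to P_{i+1}$ plus the wrap-around $P_k \to P_1$), each of at most $s$ words since it is a copy of the working memory; over $r$ phases this is $rks$ words, and the final broadcast is lower order, giving the claimed $O(krs)$ total.

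The one place that needs care, rather than being purely mechanical, is the interface between the streaming model and the distributed model: we must ensure that the $s$-word working-storage bound genuinely bounds the \emph{entire} state carried across items (any pass counter, random seed, etc., is $O(1)$ or already subsumed in $s$), so that each handoff message really is $O(s)$ words; and we must ensure the streaming algorithm's guarantee holds for the specific concatenated order we chose. The latter is automatic when the streaming algorithm is correct for adversarially ordered input, and otherwise we simply adopt whatever ordering its analysis requires, noting that any order realizable as a concatenation of the blocks $D_1, \ldots, D_k$ is available to us. Everything beyond this is a direct simulation, so I expect the write-up to be short.
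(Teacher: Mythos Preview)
Your proposal is correct and matches the paper's own proof: the paper likewise fixes an ordering of the players, has each player simulate the streaming algorithm on its local input and pass the $s$-word working store to the next player, and multiplies the per-pass $O(ks)$ cost by $r$ passes. Your additional remarks about the state subsuming auxiliary counters and about the adversarial-order assumption are reasonable caveats but go slightly beyond what the paper spells out.
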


Before proving the above lemma, we note that streaming algorithms often have $s= O(\text{poly}\log n)$ and $r = O(\log n)$, indicating that the total communication is $O(k\text{\ poly}\log n)$ words, which is sublinear in the input size. 

\begin{proof}
  For ease of exposition, let us first consider the case when $k=2$. Consider a streaming algorithm $S$ satisfying the conditions above. The simulation works by letting the first player $A$ simulate the first half of $S$, and letting the second player $B$ simulate the second half. Specifically, the first player $A$ simulates the behavior of $S$ on its input. When this simulation of $S$ exhausts the input at $A$, $A$ sends over the contents of the working store of $S$ to $B$. $B$ restarts $S$ on its input using this working store as $S$'s current state. When $B$ has finished simulating $S$ on its input, it sends the contents of the working storage back to $A$. This completes one pass of $S$, and used $s$ words of communication. The process continues for $r$ passes. 

If there are $k$ players $A_1, \ldots, A_k$ instead of two, then we fix an arbitrary ordering of the players. The first player simulates $S$ on its input, and at completion passes the contents of the working store to the next one, and so on. Each pass now requires $O(ks)$ words of communication, and the result follows. 
\end{proof}

We can apply this lemma to get a streaming algorithm for fixed-dimensional linear programming\footnote{Fixed-dimensional linear programming is the case of linear programming where the dimension is \emph{not} part of the input. Effectively, this means that exponential dependence on the dimension is permitted; the dependence on the number of constraints remains polynomial as usual.}. This relies on an existing result~\citep{DBLP:journals/dcg/ChanC07}:

\begin{theorem}[\citep{DBLP:journals/dcg/ChanC07}]
Given $n$ halfspaces in $\reals^d$ (for $d$ constant), we can compute the lowest point in their intersection by a $O(1/\delta^{d-1})$-pass Las Vegas algorithm that uses $O((1/\delta^{O(1)})n^\delta)$ space and runs in time $O((1/\delta^{O(1)})n^{1+\delta})$ with high probability, for any constant $\delta > 0$. 
\end{theorem}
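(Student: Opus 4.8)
Although this theorem is quoted as a black box from Chan and Chen~\citep{DBLP:journals/dcg/ChanC07}, here is the route I would take to prove it. The lowest point in an intersection of $n$ halfspaces is the optimum of the linear program $\min x_d$ subject to those halfspaces, so the natural starting point is the randomized-sampling paradigm for low-dimensional LP (Clarkson, Seidel), re-engineered so that its basic operations --- drawing a sample, solving a small LP, and checking a candidate for violations --- each become a constant number of streaming passes. Concretely: in one pass, draw a uniform random sample $R$ of $r=\Theta(n^{\delta})$ halfspaces by reservoir sampling, using $O(r)$ words of core memory. Since $d$ is a constant, the LP on $R$ can be solved \emph{exactly in core} in $\mathrm{poly}(r)$ time, returning a candidate lowest point $v$ together with the at most $d$ halfspaces of $R$ tight at $v$. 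A second pass over the full stream then reports whether some halfspace is violated by $v$, and if none is, $v$ is the true optimum and we stop. The Clarkson sampling lemma for LP-type problems gives that the expected number of violated halfspaces is at most $dn/r$, and the usual $\eps$-net/Chernoff inflation of $R$ by a $\mathrm{polylog}$ factor makes all the relevant quantities concentrate; these $\delta^{-O(1)}$ polylog factors are what the ``$(1/\delta^{O(1)})$'' in the space and time bounds absorb.

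The real work is that the violated set is of size $\approx dn^{1-\delta}$, far larger than the core budget $n^\delta$ when $\delta<\tfrac12$, so one cannot just append it to $R$ and re-solve as in the in-core algorithm. Instead I would recurse on the dimension, in the spirit of Seidel's incremental LP: when a violated halfspace $h$ is discovered, the optimum of the residual problem with $h$ forced to be tight lies on the bounding hyperplane of $h$, and projecting the remaining constraints onto that $(d-1)$-flat produces an LP in dimension $d-1$ on at most $n$ constraints, to be solved recursively with the \emph{same} parameter $\delta$. The plan is to interleave the ``sample / solve-in-core / find-a-violator'' rounds so that each level of this dimension recursion costs only $O(1/\delta)$ passes --- a short geometric cascade of resamplings that drives the set of constraints still able to change the answer down from $n$ to $O(n^\delta)$, at which point the surviving subproblem fits in core and is solved directly --- while correctly identifying which of the $\le d$ optimum-defining hyperplanes to recurse on. Unrolling the recursion through the $d-1$ coordinate reductions down to the one-dimensional base case multiplies these $O(1/\delta)$ per-level passes into the claimed $O(1/\delta^{\,d-1})$ total passes, with working storage never exceeding $O((1/\delta^{O(1)})\,n^\delta)$ words.

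The main obstacle is exactly this amortized pass accounting. One has to show, via a backwards analysis in the style of Clarkson and Seidel, that over the whole run the algorithm is forced to drop a dimension (equivalently, to restart a fresh sampling cascade) only $O(d)$ times in expectation rather than a number growing with $n$, and that within a fixed dimension only $O(1/\delta)$ refinement passes are ever needed to shrink the active constraint set to core size; any slack in either bound gets raised to the $(d-1)$st power across the recursion levels and destroys the stated pass count. Converting the Monte Carlo guarantee into the Las Vegas one claimed is, by contrast, routine: a single final pass verifies that the returned point is feasible and that its $d$ defining constraints form a locally optimal basis, and on the polynomially unlikely event of failure the whole procedure is restarted, which leaves the expected pass count and running time unchanged.
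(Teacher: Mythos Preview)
The paper does not prove this theorem at all: it is invoked purely as a black-box citation to Chan and Chen, and is immediately plugged into Lemma~\ref{stream-dist} to obtain Corollary~\ref{coro-lp}. You correctly flag this at the outset of your proposal, so there is no discrepancy to report on that front.

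As for the sketch itself, it is in the right spirit and tracks the actual Chan--Chen argument reasonably closely: Clarkson-style random sampling to get an $O(n^{\delta})$-size subproblem that fits in core, a verification pass to detect violators, and a Seidel-type dimension drop when a violator is found, with the multiplicative recursion $P(d)=O(1/\delta)\cdot P(d-1)$ producing the $O(1/\delta^{\,d-1})$ pass bound. You are also right to single out the pass accounting across the interleaved sampling rounds and dimension drops as the place where the real care is needed; in the original paper this is handled by a careful recursive scheme rather than the clean ``$O(d)$ dimension drops in expectation'' statement you suggest, so if you were to write this out fully that is where your narrative would need tightening. The Las Vegas conversion by a final feasibility-and-basis-optimality pass is exactly how it is done.
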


\begin{corollary}
\label{coro-lp}
  There is a $k$-player algorithm for solving distributed linear programming that uses $O(k (1/\delta^{d+O(1)})n^\delta)$ communication, for any constant $\delta >0$. 
\end{corollary}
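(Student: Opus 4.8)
The plan is to combine the streaming-to-distributed simulation of Lemma~\ref{stream-dist} with the multipass streaming linear programming algorithm of \citep{DBLP:journals/dcg/ChanC07} quoted above. First I would fix the distributed problem precisely: players $P_1, \ldots, P_k$, where $P_i$ holds a set $C_i$ of halfspaces in $\reals^d$ and the goal is to output the lowest point of $\bigcap_i \bigcap_{H \in C_i} H$ (detecting infeasibility or unboundedness as usual). A general distributed linear program $\min c^\top x$ subject to the union $\bigcup_i C_i$ reduces to this form by the standard change of coordinates that aligns $c$ with the $x_d$-axis; this reduction is purely local to each player and costs no communication, so it suffices to handle the ``lowest point in an intersection of halfspaces'' version.

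Next I would observe that the Chan--Chen procedure is a streaming algorithm whose input is the sequence of halfspaces, using $r = O(1/\delta^{d-1})$ passes and $s = O((1/\delta^{O(1)}) n^\delta)$ words of working storage, and whose output depends only on the set of halfspaces, not on the order in which they are streamed. I would therefore fix an arbitrary ordering of the players, and within each player an arbitrary ordering of its halfspaces; concatenating these blocks yields a single stream over all $n = \sum_i |C_i|$ constraints. Applying Lemma~\ref{stream-dist} verbatim to this streaming algorithm then gives a $k$-player protocol that forwards the $s$-word working store once around the cycle of players per pass, for $r$ passes, using $O(krs)$ words of communication in total.

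Finally I would substitute the parameters: $O(krs) = O\big(k \cdot (1/\delta^{d-1}) \cdot (1/\delta^{O(1)}) \, n^\delta\big) = O\big(k (1/\delta^{d+O(1)}) n^\delta\big)$, absorbing the $1/\delta^{d-1}$ pass factor and the $1/\delta^{O(1)}$ space factor into a single $1/\delta^{d+O(1)}$ term, which is exactly the claimed bound. Correctness is inherited directly from the Las Vegas guarantee of the streaming algorithm, since the simulation is an exact step-by-step emulation: the random coin flips are generated locally by whichever player is currently running the algorithm, and if the algorithm reuses a random seed across passes that seed sits inside the $s$-word working store and is carried along automatically.

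There is really no hard step here; the only point needing a moment's care is the order-independence of the streaming algorithm's output, which is what licenses partitioning the constraint universe into per-player blocks and gluing them into one stream. For linear programming this is immediate, since the feasible region $\bigcap_i C_i$ and hence its lowest point are manifestly independent of presentation order. A secondary bookkeeping check is that the ``with high probability'' resource bounds of the Chan--Chen algorithm translate into a ``with high probability'' (equivalently, expected, up to constants) communication bound for the protocol; this too is immediate because the communication is exactly $r$ times the per-pass cost $O(ks)$.
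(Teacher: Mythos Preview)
Your proposal is correct and follows exactly the approach the paper intends: apply the streaming-to-distributed simulation of Lemma~\ref{stream-dist} to the Chan--Chen multipass streaming LP algorithm and multiply the pass and space bounds to get $O(krs) = O(k(1/\delta^{d+O(1)})n^\delta)$. Your additional remarks on order-independence of the stream and on carrying the random bits inside the working store are sound elaborations, but the paper treats the corollary as an immediate plug-in and gives no separate proof.
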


While the above streaming algorithm can be applied as a blackbox in Corollary~\ref{coro-lp}, looking deeper into the streaming algorithm reveals room for improvement. As in the case of classification, suppose that we are permitted to violate an $\eps$-fraction of the constraints. It turns out that the above streaming algorithm achieves its bounds by eliminating a fixed fraction of constraints in each space, and thus requires $\log_r n$ passes, where $r = n^{\Theta(\delta)}$. If we are allowed to violate an $\eps$-fraction of constraints, we need only run the algorithm for $\log_r 1/\eps$ passes, where $r$ is now $O(1/\eps^{\Theta(\delta)})$. This allows us to replace $n$ in all terms by $1/\eps$, resulting in an algorithm with communication \emph{independent of $n$}.  

\begin{corollary}
\label{coro-lp-apx}
  There is a $k$-player algorithm for solving distributed linear programming that violates at most an $\eps$-fraction of the constraints, and that uses $O(k (1/\delta^{d+O(1)})(1/\eps)^\delta)$ communication, for any constant $\delta > 0$. 
\end{corollary}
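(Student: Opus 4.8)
The plan is to reuse the streaming algorithm of \citep{DBLP:journals/dcg/ChanC07} that underlies Corollary~\ref{coro-lp}, but to open it up and run it only partway, exploiting the $\eps$-violation allowance, and then to apply Lemma~\ref{stream-dist} to the resulting shorter streaming algorithm. Recall, as sketched just before Corollary~\ref{coro-lp}, that the $O(1/\delta^{d-1})$-pass algorithm proceeds by a prune-and-search over the constraints: grouping them into batches of size $r = n^{\Theta(\delta)}$, in each batch it resolves (commits to the correct side of, for the location of the optimum) all but a $1/r$ fraction of the currently live constraints, so that after $\log_r n$ batches only $O(1)$ live constraints remain and the optimum is read off directly. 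Its working space is governed by recursively solving the $r$-sized sub-instances in one lower dimension and is $O((1/\delta^{O(1)}) n^\delta)$.

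First I would modify the algorithm to halt as soon as the number of still-unresolved constraints drops to $\eps n$ rather than $O(1)$, and to output the optimum of $g$ over the intersection of the current search region with the half-spaces it has already resolved. This point satisfies every resolved constraint, hence violates at most the $\le \eps n$ unresolved ones, so it is a legitimate output for the relaxed problem. Reaching this state takes only $\log_r(1/\eps)$ batches at the top level, and since this count no longer involves $n$ we may re-set $r = (1/\eps)^{\Theta(\delta)}$; the recursively handled sub-instances then have size $(1/\eps)^{\Theta(\delta)}$ as well, so the net effect is simply to replace $n$ by $1/\eps$ throughout. The modified algorithm thus makes $r_{\mathrm{pass}} = O(1/\delta^{d-1})$ passes using $s = O((1/\delta^{O(1)})(1/\eps)^\delta)$ words of storage, independently of $n$; note that the approximation is applied only at the outermost level, so the $\eps$-violation budget is never split across the $d-1$ recursive levels.

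Finally I would apply Lemma~\ref{stream-dist} with these parameters: a streaming algorithm using $s$ words and $r_{\mathrm{pass}}$ passes yields a $k$-player protocol using $k\cdot r_{\mathrm{pass}}\cdot s = O\!\big(k\,(1/\delta^{d-1})(1/\delta^{O(1)})(1/\eps)^\delta\big) = O\!\big(k\,(1/\delta^{d+O(1)})(1/\eps)^\delta\big)$ words of communication, which is exactly the claimed bound; the Las Vegas guarantee of the base algorithm carries through the simulation unchanged.

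The step I expect to be the real obstacle is justifying correctness of the early termination --- everything else is substitution and bookkeeping. One has to look inside the recursion of \citep{DBLP:journals/dcg/ChanC07} to confirm that ``resolving'' a constraint really does certify that the current search region lies on its feasible side (so that the at-most-$\eps n$ still-unresolved constraints are genuinely the only ones the output can violate), and to check that truncating the outer prune-and-search does not interfere with the inner-dimensional recursions or with the Las Vegas randomization. Given that, feeding $(s, r_{\mathrm{pass}})$ through Lemma~\ref{stream-dist} is routine.
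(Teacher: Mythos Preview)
Your proposal is correct and follows essentially the same approach as the paper: open up the Chan--Chen prune-and-search, stop once only an $\eps$-fraction of constraints remain live, observe that this replaces $n$ by $1/\eps$ in both the pass count and the space bound, and then invoke Lemma~\ref{stream-dist}. The paper's own argument is the short paragraph preceding the corollary and is in fact less detailed than yours; your explicit discussion of why early termination is sound (and why the $\eps$-budget need not be split across the $d-1$ recursive levels) fills in exactly the step the paper leaves implicit.
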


\subsection{Optimization via Multiplicative Weight Updates}
\label{ssec:mwu}

The above result gives an approach for solving \emph{fixed-dimensional} linear programming (exactly or with at most $\eps n$ violated constraints) in a distributed setting. There is no known streaming algorithm for arbitrary-dimensional linear programming, so the stream-algorithm-based design strategy cannot be used. However we will now show that the multiplicative weight update method can be applied in a distributed manner, and this allows us to solve general linear programming problems, as well as SDPs and other convex optimizations. 

We first consider the problem of solving a general LP of the form $\min g^\top x$, subject to $Ax \ge b$, $x \in P$, where $P$ is a set of ``soft'' constraints (for example, $x \ge 0$) and $Ax \ge b$ are the ``hard'' constraints. Let $z^* = \min g^\top x^*$ be the optimal value of the LP, obtained at $x^*$. Then the multiplicative weight update method can be used to obtain a solution $\tilde{x}$ such that $z^* = g^\top \tilde{x}$ and all (hard) constraints are satisfied approximately, i.e $\forall i$, $A_i \tilde{x} \ge b_i - \eps$, where $A_i x \ge b_i$ is one row of the constraint matrix. We call such a solution a \emph{soft-$\eps$-approximation} (to distinguish it from a traditional approximation in which all constraints would be satisfied \emph{exactly} and the objective would be approximately achieved. 

The standard protocol works as follows~\citep{Arora05MWUsurvey}. We assume that the optimal $z^*$ has been guessed (this can be determined by binary search), and define the set of ``soft'' constraints to be $\EuScript{P} = P \cup \{ x \mid g^\top x = z^*\}$. Typically, it is easy to check for feasibility in $\EuScript{P}$. We define a \emph{width} parameter $\rho = \max \{\max_{i \in [n],x \in \EuScript{P}} A_i x - b_i ,1\}$. Initialize $m_i(0) = 0$. Then we run $T= O(\rho^2\ln n/\eps^2)$ iterations (with $t = 1, 2, \ldots, T$) of the following:
\begin{enumerate} \denselist
  \item Set $p_i(t) = \exp(- \eps m_i(t-1)/2)$.
  \item Find feasible $x(t)$ in $\EuScript{P} \cup \{ x \mid \sum_i p_i A_i x \ge \sum_i p_i b_i\}$.
  \item $m_i(t) = m_i(t-1) + A_i x(t) - b_i$. 
\end{enumerate}
At the end, we return $\overline{x} = (1/t)\sum_t x(t)$ as our soft-$\eps$-approximation for the LP.

We now describe a two-party distributed protocol for linear programming adapted from this scheme. The protocol is asymmetric. Player $A$ finds feasible values of $x$ and player $B$ maintains the weights $m_i$. Specifically, player $A$ constructs a feasible set $\EuScript{P}$ consisting of the original feasible set $P$ and all of its own constraints. As above, $B$ initializes a weight vector $m$ to all zeros, and then sends over the \emph{single} constraint $\sum_i p_i A_i x \ge \sum_i p_i b_i$ to $A$. Player $A$ then finds a feasible $x$ using this constraint as well as $\EuScript{P}$ (solving a linear program) and then sends the resulting $x$ back to $B$, who updates its weight vector $m$. 

Each round of communication requires $O(d)$ words of information, and there are $O(\rho^2\ln n/\eps^2)$ rounds of communication. Notice that this is exponentially better than merely sending over all constraints. 

\begin{theorem}
  There is a 2-player distributed protocol that uses $O(d\rho^2\ln n/\eps^2)$ words of communication to compute a soft-$\eps$-approximation for a linear program. 
\end{theorem}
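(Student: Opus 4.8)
The plan is to show that the two-party protocol just described is a faithful distributed simulation of the centralized multiplicative-weight-update LP solver of \citep{Arora05MWUsurvey}, and then merely to count the communication. First I would recall the guarantee of the centralized scheme: taking the ``hard'' constraints to be $B$'s constraints and the ``soft'' (easy-to-check) region to be $\EuScript{P} = P \cup \{x \mid g^\top x = z^*\}$ together with all of $A$'s constraints, running $T = O(\rho^2 \ln n/\eps^2)$ iterations of steps (1)--(3) and returning $\overline{x} = (1/T)\sum_t x(t)$ yields a soft-$\eps$-approximation: every hard constraint holds up to an additive $\eps$, while the returned point lies in $\EuScript{P}$. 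This last point is where I would invoke convexity: each iterate $x(t)$ is chosen in $\EuScript{P} \cup \{x \mid \sum_i p_i A_i x \ge \sum_i p_i b_i\} \subseteq \EuScript{P}$, and $\EuScript{P}$ is convex (a convex set $P$ intersected with one linear equality and finitely many linear inequalities), so $\overline{x} \in \EuScript{P}$, i.e., $A$'s constraints are met exactly and $g^\top \overline{x} = z^*$.

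Next I would argue that the protocol reproduces these iterates exactly. The constraint data enters step (2) only through the single aggregate inequality $\sum_i p_i A_i x \ge \sum_i p_i b_i$; since $B$ holds the weights $m$ (hence $p_i = \exp(-\eps m_i(t-1)/2)$) and its own rows, $B$ forms this aggregate and sends it to $A$ in one message. Player $A$, holding $\EuScript{P}$, solves the feasibility LP of step (2) to obtain $x(t)$, returns it, and $B$ performs the update $m_i(t) = m_i(t-1) + A_i x(t) - b_i$ of step (3) locally. Thus (for a fixed tie-breaking rule in the feasibility oracle) the sequence $x(1), \ldots, x(T)$ produced by the protocol is identical to that of the centralized algorithm, and $\overline{x}$ is assembled by $A$ from the iterates it already holds. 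Guessing $z^*$ is handled, as in the centralized scheme, by an outer binary search over the range of $g^\top x$; each guess is one instance of the above and contributes only a logarithmic multiplicative overhead, which I would note explicitly or absorb into the statement.

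For the communication count: in each of the $T$ rounds, $B$ sends one constraint in $\reals^d$ ($d$ coefficients plus a right-hand side, $O(d)$ words) and $A$ sends back $x(t) \in \reals^d$ ($O(d)$ words), for $O(d)$ words per round; multiplying by $T = O(\rho^2 \ln n/\eps^2)$ gives the claimed $O(d\rho^2 \ln n/\eps^2)$ words. The one remaining item is a one-time exchange so that both parties agree on $\rho = \max\{\max_{i,\,x\in\EuScript{P}} A_i x - b_i,\ 1\}$ and on $T$; each party computes (or bounds) its own contribution and sends it over in $O(d)$ words, not affecting the asymptotics.

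I expect the only real subtlety — the ``main obstacle'' — to be the bookkeeping that the asymmetric split (player $A$ always satisfies its own constraints exactly and never reweights them, while only $B$'s constraints drive the weights) still fits the hypotheses of the standard MWU analysis. This is the familiar device of folding ``easy'' constraints into the oracle's feasible region $\EuScript{P}$, but one should verify that the width $\rho$ and the $\ln n$ factor are then governed only by $B$'s at most $n$ constraints, so that no hidden dependence on the number of $A$'s constraints creeps in. Everything else is a direct transcription of the cited centralized result.
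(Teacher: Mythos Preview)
Your proposal is correct and follows essentially the same approach as the paper: the paper's argument is simply the protocol description preceding the theorem (asymmetric split with $A$ holding $\EuScript{P}$ plus its own constraints and $B$ maintaining the weights and sending a single aggregated constraint per round) together with the per-round $O(d)$ communication count over $O(\rho^2 \ln n/\eps^2)$ rounds. You have added welcome detail—the convexity argument for $\overline{x}\in\EuScript{P}$, the binary search for $z^*$, and the observation that $\rho$ and $\ln n$ depend only on $B$'s constraints—but the underlying proof strategy is identical.
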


A similar result applies for semidefinite programming (based on an existing primal MWU-based SDP algorithm~\citep{DBLP:conf/focs/AroraHK05}) as well as other optimizations for which the MWU applies, such as rank minimization~\citep{DBLP:conf/icml/MekaJCD08}, etc. 

%%% Local Variables: 
%%% mode: latex
%%% TeX-master: "main"
%%% End: 

%===========================================================================================================
\section{Conclusion}
\label{sec:disc}
In this work, we have proposed a simple and efficient protocol that learns an $\eps$-optimal distributed classifier for hyperplanes in arbitrary dimensions. The protocol also gracefully extends to $k$-players. Our proposed technique \ritsupp relates to the MWU-based meta framework and we exploit this connection to extend \ritsupp for distributed convex optimization problems. This makes our protocol applicable to a wide variety of distributed learning problems that can be formulated as an optimization task over multiple distributed nodes.

\bibliographystyle{icml2012}
\bibliography{ml}

\begin{thebibliography}{27}
\providecommand{\natexlab}[1]{#1}
\providecommand{\url}[1]{\texttt{#1}}
\expandafter\ifx\csname urlstyle\endcsname\relax
  \providecommand{\doi}[1]{doi: #1}\else
  \providecommand{\doi}{doi: \begingroup \urlstyle{rm}\Url}\fi

\bibitem[Agarwal \& Duchi(2011)Agarwal and Duchi]{NIPS2011_0574}
Agarwal, Alekh and Duchi, John.
\newblock Distributed delayed stochastic optimization.
\newblock In \emph{NIPS}. 2011.

\bibitem[Anthony \& Bartlett(2009)Anthony and Bartlett]{book09anthony}
Anthony, Martin and Bartlett, Peter~L.
\newblock \emph{Neural Network Learning: Theoretical Foundations}.
\newblock Cambridge, 2009.

\bibitem[Arora et~al.(2005{\natexlab{a}})Arora, Hazan, and
  Kale]{Arora05MWUsurvey}
Arora, Sanjeev, Hazan, Elad, and Kale, Satyen.
\newblock The multiplicative weights update method: a meta algorithm and
  applications.
\newblock Technical report, 2005{\natexlab{a}}.

\bibitem[Arora et~al.(2005{\natexlab{b}})Arora, Hazan, and
  Kale]{DBLP:conf/focs/AroraHK05}
Arora, Sanjeev, Hazan, Elad, and Kale, Satyen.
\newblock Fast algorithms for approximate semidefinite programming using the
  multiplicative weights update method.
\newblock In \emph{FOCS}, 2005{\natexlab{b}}.

\bibitem[Balcan et~al.(2012)Balcan, Blum, Fine, and Mansour]{balcan12colt}
Balcan, Maria~Florina, Blum, Avrim, Fine, Shai, and Mansour, Yishay.
\newblock Distributed learning, communication complexity and privacy.
\newblock Personal communication, February 2012.

\bibitem[Bauer \& Kohavi(1999)Bauer and Kohavi]{bauer99voting-emp}
Bauer, Eric and Kohavi, Ron.
\newblock An empirical comparison of voting classification algorithms: Bagging,
  boosting, and variants.
\newblock \emph{Machine Learning}, 36\penalty0 (1-2), 1999.

\bibitem[Bekkerman et~al.(2011)Bekkerman, Bilenko, and
  Langford]{langford11dist}
Bekkerman, Ron, Bilenko, Mikhail, and Langford, John.
\newblock Scaling up machine learning: Parallel and distributed approaches,
  2011.

\bibitem[Bennett \& Parrado-Hern\'{a}ndez(2006)Bennett and
  Parrado-Hern\'{a}ndez]{Bennett:2006:IOM:1248547.1248593}
Bennett, Kristin~P. and Parrado-Hern\'{a}ndez, Emilio.
\newblock The interplay of optimization and machine learning research.
\newblock \emph{J. Mach. Learn. Res.}, 7:\penalty0 1265--1281, December 2006.

\bibitem[Chan \& Chen(2007)Chan and Chen]{DBLP:journals/dcg/ChanC07}
Chan, Timothy~M. and Chen, Eric~Y.
\newblock Multi-pass geometric algorithms.
\newblock \emph{Disc. {\&} Comp. Geom.}, 37\penalty0 (1):\penalty0 79--102,
  2007.

\bibitem[Chang \& Lin(2011)Chang and Lin]{libsvm}
Chang, Chih~Chung and Lin, Chih~Jen.
\newblock {LIBSVM}: A library for support vector machines.
\newblock \emph{ACM TIST}, 2\penalty0 (3), 2011.

\bibitem[Chazelle(2000)]{Cha01}
Chazelle, Bernard.
\newblock \emph{The Discrepancy Method}.
\newblock Cambridge, 2000.

\bibitem[Chu et~al.(2007)Chu, Kim, Lin, Yu, Bradski, Ng, and
  Olukotun]{NIPS2006_725}
Chu, Cheng~Tao, Kim, Sang~Kyun, Lin, Yi~An, Yu, YuanYuan, Bradski, Gary, Ng,
  Andrew~Y., and Olukotun, Kunle.
\newblock Map-reduce for machine learning on multicore.
\newblock In \emph{NIPS}. 2007.

\bibitem[Collins(2002)]{collins02discHMM}
Collins, Michael.
\newblock Discriminative training methods for hidden markov models: theory and
  experiments with perceptron algorithms.
\newblock In \emph{EMNLP}, 2002.

\bibitem[Cormode et~al.(2008)Cormode, Muthukrishnan, and
  Yi]{DBLP:conf/soda/CormodeMY08}
Cormode, Graham, Muthukrishnan, S., and Yi, Ke.
\newblock Algorithms for distributed functional monitoring.
\newblock In \emph{SODA}, 2008.

\bibitem[Cormode et~al.(2010)Cormode, Muthukrishnan, Yi, and
  Zhang]{DBLP:conf/pods/CormodeMYZ10}
Cormode, Graham, Muthukrishnan, S., Yi, Ke, and Zhang, Qin.
\newblock Optimal sampling from distributed streams.
\newblock In \emph{PODS}, 2010.

\bibitem[{Daum\'e III} et~al.(2012){Daum\'e III}, Phillips, Saha, and
  Venkatasubramanian]{daume12distributed}
{Daum\'e III}, Hal, Phillips, Jeff, Saha, Avishek, and Venkatasubramanian,
  Suresh.
\newblock Protocols for learning classifiers on distributed data.
\newblock In \emph{AISTATS (To appear)}, 2012.

\bibitem[Dekel et~al.(2010)Dekel, Gilad-Bachrach, Shamir, and
  Xiao]{DBLP:journals/corr/abs-1012-1367}
Dekel, Ofer, Gilad-Bachrach, Ran, Shamir, Ohad, and Xiao, Lin.
\newblock Optimal distributed online prediction using mini-batches.
\newblock \emph{CoRR}, abs/1012.1367, 2010.

\bibitem[Duchi et~al.(2010)Duchi, Agarwal, and Wainwright]{NIPS2010_0423}
Duchi, John, Agarwal, Alekh, and Wainwright, Martin.
\newblock Distributed dual averaging in networks.
\newblock In \emph{NIPS}. 2010.

\bibitem[Frank \& Asuncion(2010)Frank and Asuncion]{UCIrepo}
Frank, A. and Asuncion, A.
\newblock {UCI} machine learning repository, 2010.
\newblock URL \url{http://archive.ics.uci.edu/ml}.

\bibitem[Mann et~al.(2009)Mann, McDonald, Mohri, Silberman, and
  Walker]{mann09distmem}
Mann, Gideon, McDonald, Ryan, Mohri, Mehryar, Silberman, Nathan, and Walker,
  Dan.
\newblock Efficient large-scale distributed training of conditional maximum
  entropy models.
\newblock In \emph{NIPS}, 2009.

\bibitem[Matousek(1991)]{Mat91}
Matousek, Jiri.
\newblock Approximations and optimal geometric divide-and-conquer.
\newblock In \emph{STOC}, 1991.

\bibitem[McDonald et~al.(2010)McDonald, Hall, and
  Mann]{mcdonald10diststrucperc}
McDonald, Ryan, Hall, Keith, and Mann, Gideon.
\newblock Distributed training strategies for the structured perceptron.
\newblock In \emph{NAACL HLT}, 2010.

\bibitem[Meka et~al.(2008)Meka, Jain, Caramanis, and
  Dhillon]{DBLP:conf/icml/MekaJCD08}
Meka, Raghu, Jain, Prateek, Caramanis, Constantine, and Dhillon, Inderjit~S.
\newblock Rank minimization via online learning.
\newblock In \emph{ICML}, 2008.

\bibitem[Muthukrishnan(2005)]{muthubook2005data}
Muthukrishnan, S.
\newblock \emph{Data streams: algorithms and applications}.
\newblock Foundations and trends in theoretical computer science. Now
  Publishers, 2005.

\bibitem[Servedio \& Long(2011)Servedio and Long]{NIPS2011_0771}
Servedio, Rocco~A. and Long, Phil.
\newblock Algorithms and hardness results for parallel large margin learning.
\newblock In \emph{NIPS}, 2011.

\bibitem[Teo et~al.(2010)Teo, Vishwanthan, Smola, and
  Le]{Teo:2010:BMR:1756006.1756016}
Teo, Choon~Hui, Vishwanthan, S.V.N., Smola, Alex~J., and Le, Quoc~V.
\newblock Bundle methods for regularized risk minimization.
\newblock \emph{J. Mach. Learn. Res.}, 11:\penalty0 311--365, March 2010.

\bibitem[Zinkevich et~al.(2010)Zinkevich, Weimer, Smola, and Li]{NIPS2010_1162}
Zinkevich, Martin, Weimer, Markus, Smola, Alex, and Li, Lihong.
\newblock Parallelized stochastic gradient descent.
\newblock In \emph{NIPS}, 2010.

\end{thebibliography}

\end{document}